\newcommand{\OT}{\text{ \fontfamily{cmr}\selectfont OT}}
\newcommand{\KL}{\text{ \fontfamily{cmr}\selectfont KL}}
\newcommand{\px}[1]{\ensuremath{{#1}^2}\ px}
\newtheoremstyle{thm-}
  {3pt}   % ABOVESPACE
  {0pt}   % BELOWSPACE
  {\itshape}  % BODYFONT
  {0pt}       % INDENT (empty value is the same as 0pt)
  {} % HEADFONT
  {.}         % HEADPUNCT
  {3pt} % HEADSPACE
  {{\bfseries \thmname{#1}}, \thmname{#3}}          % CUSTOM-HEAD-SPEC
\newtheoremstyle{plain}
  {3pt}   % ABOVESPACE
  {0pt}   % BELOWSPACE
  {\itshape}  % BODYFONT
  {0pt}       % INDENT (empty value is the same as 0pt)
  {\bfseries} % HEADFONT
  {.}         % HEADPUNCT
  {3pt} % HEADSPACE
  {}          % CUSTOM-HEAD-SPEC
\theoremstyle{plain}
\newtheorem{thm}{Theorem}[section]
\newtheorem{prop}[thm]{Proposition}
\newtheorem{lem}[thm]{Lemma}
\theoremstyle{thm-}
\newtheorem*{thm-}{Theorem}
\newtheorem*{prop-}{Proposition}
\newtheorem*{assn-}{Assumption}
\theoremstyle{definition}
\newtheorem{dfn}[thm]{Definition}
\newtheorem{assn}[thm]{Assumption}
\let\temp\phi
\let\phi\varphi
\let\varphi\temp
\let\temp\epsilon
\let\epsilon\varepsilon
\let\varepsilon\temp
\newcommand{\R}{{\mathbb R}}
\newcommand{\E}{{\mathbb E}}
\newcommand{\Kcl}{\mathcal{K}}
\newcommand{\Lcl}{\mathcal{L}}
\newcommand{\Mcl}{\mathcal{M}}
\newcommand{\Ncl}{\mathcal{N}}
\newcommand{\Pcl}{\mathcal{P}}
\newcommand{\Xcl}{\mathcal{X}}
\newcommand{\Ycl}{\mathcal{Y}}
\DeclareMathOperator{\diam}{diam}
\renewcommand{\epsilon}{\ensuremath{\varepsilon}}
\DeclareMathOperator*{\argmax}{arg\,max}
\DeclareMathOperator*{\argmin}{arg\,min}
\newcommand{\ditto}{%
    \tikz{
        \draw [line width=0.12ex] (-0.2ex,0) -- +(0,0.8ex)
            (0.2ex,0) -- +(0,0.8ex);
        \draw [line width=0.08ex] (-0.6ex,0.4ex) -- +(-1.5em,0)
            (0.6ex,0.4ex) -- +(1.5em,0);
    }%
}
\title{Score-based Generative Neural Networks for Large-Scale Optimal Transport}
\author{%
  Mara Daniels\\
  Northeastern University\\
  \texttt{daniels.g@northeastern.edu}
  \And
  Tyler Maunu \thanks{Work done while an Instructor in Applied Mathematics at MIT.} \\
  Brandeis University \\
  \texttt{maunu@brandeis.edu}
  \And
  Paul Hand\\
  Northeastern University\\
  \texttt{p.hand@northeastern.edu}
}
\begin{document}

\maketitle

\begin{abstract}
  We consider the fundamental problem of sampling the optimal transport coupling between given source and target distributions. In certain cases, the optimal transport plan takes the form of a one-to-one mapping from the source support to the target support, but learning or even approximating such a map is computationally challenging for large and high-dimensional datasets due to the high cost of linear programming routines and an intrinsic curse of dimensionality. We study instead the Sinkhorn problem, a regularized form of optimal transport whose solutions are couplings between the source and the target distribution. We introduce a novel framework for learning the Sinkhorn coupling between two distributions in the form of a score-based generative model. Conditioned on source data, our procedure iterates Langevin Dynamics to sample target data according to the regularized optimal coupling. Key to this approach is a neural network parametrization of the Sinkhorn problem, and we prove convergence of gradient descent with respect to network parameters in this formulation. We demonstrate its empirical success on a variety of large scale optimal transport tasks. 
\end{abstract}

\section{Introduction} \label{sec:intro}
It is often useful to compare two data distributions by computing a distance between them in some appropriate metric. For instance, statistical distances can be used to fit the parameters of a distribution to match some given data. Comparison of statistical distances can also enable distribution testing, quantification of distribution shifts, and provide methods to correct for distribution shift through domain adaptation \cite{Kouw_Loog_2019}. 

Optimal transport theory provides a rich set of tools for comparing distributions in \textit{Wasserstein Distance}. Intuitively, an optimal transport plan from a source distribution $\sigma \in \Mcl_+(\Xcl)$ to a target distribution $\tau \in \Mcl_+(\Ycl)$ is a blueprint for transporting the mass of $\sigma$ to match that of $\tau$ as cheaply as possible with respect to some ground cost. Here, $\Xcl$ and $\Ycl$ are compact metric spaces and $\Mcl_+(\Xcl)$ denotes the set of positive Radon measures over $\Xcl$, and it is assumed that $\sigma$, $\tau$ are supported over all of $\Xcl$, $\Ycl$ respectively. The Wasserstein Distance between two distributions is defined to be the cost of an optimal transport plan.

Because the ground cost can incorporate underlying geometry of the data space, optimal transport plans often provide a meaningful correspondence between points in $\Xcl$ and $\Ycl$. A famous example is given by Brenier's Theorem, which states that, when $\Xcl, \Ycl \subseteq \R^d$ and $\sigma$, $\tau$ have finite variance, the optimal transport plan under a squared-$l_2$ ground cost is realized by a map $T: \Xcl \to \Ycl$  \cite[Theorem~2.12]{Villani_Society_2003}. However, it is often computationally challenging to exactly compute optimal transport plans, as one must exactly solve a linear program requiring time which is super-quadratic in the size of input datasets \cite{Cuturi_2013}.

\begin{figure}
   \centering
     \includegraphics[width=\textwidth]{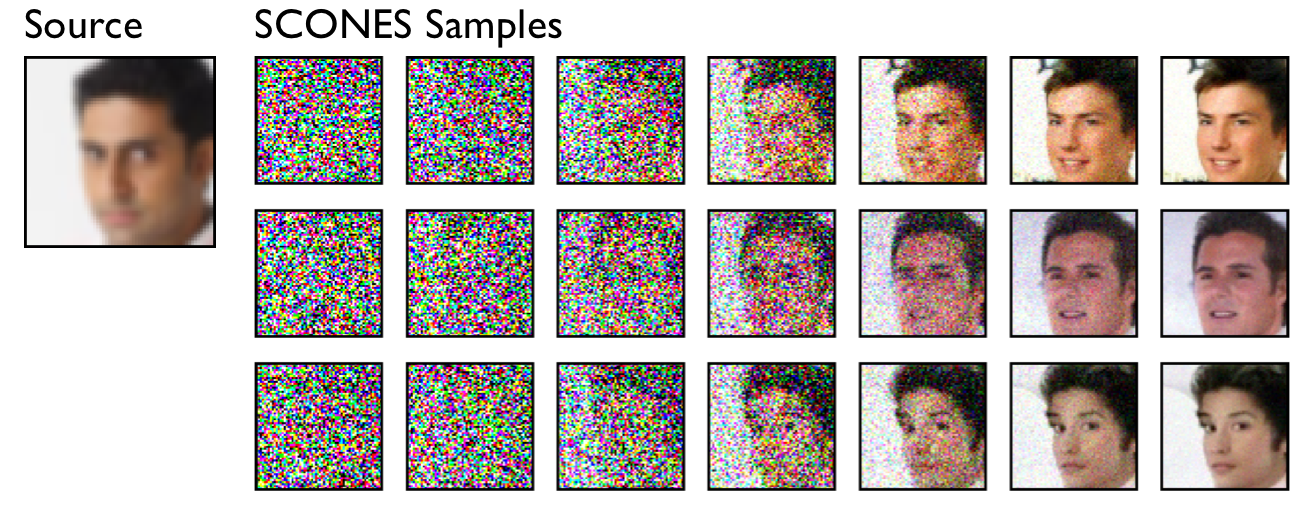}
     \caption{We use SCONES to sample the mean-squared-$L^2$ cost, entropy regularized optimal transport mapping between 2x downsampled CelebA images (Source) and unmodified CelebA images (Target) at $\lambda = 0.005$ regularization. }
     \label{fig:celeba32px-celeba}
     \vspace{-0.5cm}
\end{figure}
Instead, we opt to study a regularized form of the optimal transport problem whose solution takes the form of a joint density $\pi(x,y)$ with marginals $\pi_X(x) = \sigma(x)$ and $\pi_Y(y) = \tau(y)$. A correspondence between points is given by the conditional distribution $\pi_{Y \mid X = x}(y)$, which relates each input point to a distribution over output points. 

% If the data densities $\sigma$ and $\tau$ are continuous, high-dimensional probability measures, leveraging this correspondence presents two computational challenges: estimating the density $\pi(x, y)$ and sampling the conditional $\pi_{Y \mid X = x}(y)$. 

In recent work \cite{Seguy_Damodaran_Flamary_Courty_Rolet_Blondel_2018}, the authors propose a large-scale stochastic dual approach in which $\pi(x, y)$ is parametrized by two continuous dual variables that may be represented by neural networks and trained at large-scale via stochastic gradient ascent. Then, with access to $\pi(x, y)$, they approximate an optimal transport \textit{map} using a barycentric projection of the form $T : x \mapsto \argmin_{y} \E_{\pi_{Y \mid X = x}}[d(y, Y)]$, where $d:\Ycl \times \Ycl \to \R$ is a convex cost on $\Ycl$. Their method is extended by \cite{Li_Genevay_Yurochkin_Solomon_2020} to the problem of learning regularized Wasserstein barycenters. In both cases, the Barycentric projection is observed to induce averaging artifacts such as those shown in Figure \ref{fig:methods-comparison}. 

Instead, we propose a direct sampling strategy to generate samples from $\pi_{Y \mid X=x}(y)$ using a \textit{score-based generative model}. Score-based generative models are trained to sample a generic probability density by iterating a stochastic dynamical system knows as \textit{Langevin dynamics} \citep{Song_Ermon_2020}. In contrast to projection methods for large-scale optimal transport, we demonstrate that pre-trained score based generative models can be naturally applied to the problem of large-scale regularized optimal transport. Our main contributions are as follows:
\begin{enumerate}
    \item We show that pretrained score based generative models can be easily adapted for the purpose of sampling high dimensional regularized optimal transport plans. Our method eliminates the need to estimate a barycentric projection and it results in sharper samples because it  eliminates averaging artifacts incurred by such a projection. 
    
    \item Score based generative models have been used for unconditional data generation and for conditional data generation in settings such as inpainting. We demonstrate how to adapt pretrained score based generative models for the more challenging conditional sampling problem of \textit{regularized optimal transport}.

    \item Our method relies on a neural network parametrization of the dual regularized optimal transport problem. Under assumptions of large network width, we prove that gradient descent w.r.t. neural network parameters converges to a global maximizer of the dual problem. We also prove optimization error bounds based on a stability analysis of the dual problem.
    
    \item We demonstrate the empirical success of our method on a synthetic optimal transport task and on optimal transport of high dimensional image data. 
\end{enumerate}

\section{Background and Related Work}\label{sec:background}
We will briefly review some key facts about optimal transport and generative modeling. For a more expansive background on optimal transport, we recommend the references \citep{Villani_Society_2003} and \citep{Thorpe}.  

\begin{figure}[t]
    \centering
    \includegraphics[width=\textwidth]{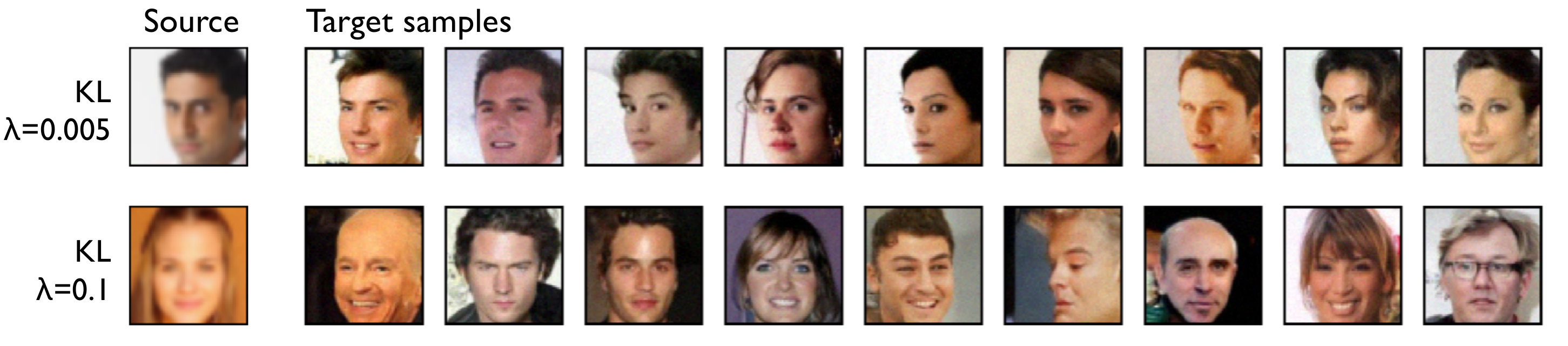}
    \caption{Samples generated by SCONES for entropy regularized optimal transport including the samples shown in Figure \ref{fig:celeba32px-celeba}. At regularization $\lambda=0.005$, optimal transportation with $L^2$ cost has a visible effect on generated images. This effect diminishes at increased regularization $\lambda=0.1$.}
    \label{fig:my_label}
\end{figure}

\subsection{Regularized Optimal Transport}\label{sec:background-reg-ot}
We begin by reviewing the formulation of the regularized OT problem.

\begin{dfn}[Regularized OT] 
\label{def:reg-ot}
Let $\sigma \in \Mcl_+(\Xcl)$ and $\tau \in \Mcl_+(\Ycl)$ be probability measures supported on compact sets $\Xcl$, $\Ycl$. Let $c : \Xcl \times \Ycl \to \R$ be a convex, lower semi-continuous function representing cost of transporting a point $x \in \Xcl$ to $y \in \Ycl$. The regularized optimal transport distance $\OT_\lambda(\sigma, \tau)$ is given by
\begin{align}\label{eqn:reg-ot}
    \OT_\lambda(\sigma, \tau)&  = \min_{\pi} \E_\pi[c(x, y)] + \lambda H(\pi) \\
    \text{subject to} & \quad \pi_X = \sigma, \quad \pi_Y = \tau \nonumber \\
    & \quad \pi(x, y) \geq 0 \nonumber
\end{align}
where $H:\Mcl_+(\Xcl \times \Ycl) \to \R$ is a convex regularizer and $\lambda \geq 0$ is a regularization parameter.
\end{dfn}

We are mainly concerned with optimal transport of empirical distributions, where $\Xcl$ and $\Ycl$ are finite and $\sigma$, $\tau$ are empirical probability vectors. In most of the following theorems, we will work in the \textit{empirical setting} of Definition \ref{def:reg-ot}, so that $\Xcl$ and $\Ycl$ are finite subsets of $\R^d$ and $\sigma$, $\tau$ are vectors in the probability simplices of dimension $|\Xcl|$ and $|\Ycl|$, respectively. 

We refer to the objective $K_\lambda(\pi) = \E_\pi[c(x, y)] + \lambda H(\pi)$ as the \textit{primal objective}, and we will use $J_\lambda(\phi, \psi)$ to refer to the associated \textit{dual objective}, with dual variables $\phi$, $\psi$. Two common regularizers are $H(\pi) = \KL(\pi || \sigma \times \tau)$ and $H(\pi) = \chi^2(\pi || \sigma \times \tau)$, sometimes called \textit{entropy} and \textit{$l_2$} regularization respectively:
\begin{align*}
    \KL(\pi || \sigma \times \tau) & = \E_{\pi} \left[\log\left( \frac{d\pi(x, y)}{d\sigma(x)d\tau(y)}\right) \right], \quad
    \chi^2(\pi || \sigma \times \tau) = \E_{\sigma \times \tau} \left[ \left( \frac{d \pi(x, y)}{ d \sigma(x) d\tau(y)} \right)^2 \right]
\end{align*}
where $\frac{d \pi(x, y)}{d \sigma(x) d \tau(y)}$ is the Radon-Nikodym derivative of $\pi$ with respect to the product measure $\sigma \times \tau$. These regularizers contribute useful optimization properties to the primal and dual problems. 

For example, $\KL(\pi || \sigma \times \tau)$ is exactly the mutual information $I_\pi(X;Y)$ of the coupling $(X, Y) \sim \pi$, so intuitively speaking, entropy regularization explicitly prevents $\pi_{Y \mid X=x}$ from concentrating on a point by stipulating that the conditional measure retain some bits of uncertainty after conditioning. The effects of this regularization are described by Propositions \ref{prop:kl-dual-1} and \ref{prop:kl-dual-2}. 

First, regularization induces convexity properties which are useful from an optimization perspective.

%\begin{prop}\label{prop:kl-dual-1}
%In the empirical setting of Definition \ref{def:reg-ot}, the entropy regularized primal problem $K_\lambda(\pi)$ is $\lambda$-strongly convex in $l_1$ norm: for $\pi_1, \pi_2 \in \Mcl_+(\Xcl \times \Ycl)$,
%\begin{align*}
%    K_\lambda(\pi_2) \geq K_\lambda(\pi_1) + \langle \nabla_\pi K_\lambda(\pi_1), \pi_2 - \pi_1 \rangle + \frac{\lambda}{2} | \pi_2 - \pi_1 |_1^2.
%\end{align*}
%The dual problem $J_\lambda(\phi, \psi)$ is concave, unconstrained, and $\frac{1}{\lambda}$-strongly smooth in $l_\infty$ norm: for $S_1, S_2 \in \R^{2d}$,
%\begin{align*}
%    J_\lambda(S_2) \leq J_\lambda(S_2) + \langle \nabla_S J_\lambda(S_1), S_2 - S_1 \rangle + \frac{\lambda}{2} | S_2 - S_1 |_\infty^2
%\end{align*}
%where $S_i = (\phi_i, \psi_i)$ is the vector concatenation of the dual variables. 

%Additionally, strong duality holds: $\inf_{\pi \in \Mcl_+(\Xcl \times \Ycl)} K_\lambda(\pi) = \sup_{\phi, \psi \in \R^{2d}} J_\lambda(\phi, \psi)$, and the extrema of each objective are attained over their respective domains.
%\end{prop}

\begin{prop}\label{prop:kl-dual-1}
In the empirical setting of Definition \ref{def:reg-ot}, the entropy regularized primal problem $K_\lambda(\pi)$ is $\lambda$-strongly convex in $l_1$ norm. The dual problem $J_\lambda(\phi, \psi)$ is concave, unconstrained, and $\frac{1}{\lambda}$-strongly smooth in $l_\infty$ norm. 
Additionally, these objectives witness strong duality: $\inf_{\pi \in \Mcl_+(\Xcl \times \Ycl)} K_\lambda(\pi) = \sup_{\phi, \psi \in \R^{2d}} J_\lambda(\phi, \psi)$, and the extrema of each objective are attained over their respective domains.
\end{prop}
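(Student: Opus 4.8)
The plan is to dispatch the three assertions in order, using two classical ingredients: Pinsker's inequality for the strong convexity of the primal, and the conjugacy between strong convexity and strong smoothness for the dual. Throughout, write the empirical coupling as a matrix $\pi\in\R^{\Xcl\times\Ycl}$, put $A\pi:=(\pi\mathbf{1},\pi^{\top}\mathbf{1})$ and $b:=(\sigma,\tau)$, so the feasible set is $U(\sigma,\tau)=\{\pi\geq0:A\pi=b\}$, and note that every feasible $\pi$ is a probability vector, $\mathbf{1}^{\top}\pi\mathbf{1}=1$. For strong convexity of the primal, decompose $K_\lambda(\pi)=\langle c,\pi\rangle+\lambda H(\pi)$ with $H(\pi)=h(\pi)-\langle\log(\sigma\tau^{\top}),\pi\rangle$, where $h(\pi)=\sum_{x,y}\pi_{xy}\log\pi_{xy}$ is the vectorized negative entropy and $\log(\sigma\tau^{\top})$ has entries $\log(\sigma_x\tau_y)$. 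On the probability simplex, which contains $U(\sigma,\tau)$, Pinsker's inequality gives $D_h(\pi,\pi')=\KL(\pi\|\pi')\geq\tfrac12\|\pi-\pi'\|_1^2$, i.e.\ $h$ is $1$-strongly convex in $\ell_1$; since $K_\lambda$ differs from $\lambda h$ only by the affine term $\langle c-\lambda\log(\sigma\tau^{\top}),\cdot\rangle$, it is $\lambda$-strongly convex in $\ell_1$ on $U(\sigma,\tau)$.

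For the dual, apply Fenchel--Rockafellar duality to $\min\{K_\lambda(\pi):A\pi=b\}$: the dual reads $\max_{u=(\phi,\psi)}\langle b,u\rangle-K_\lambda^{\ast}(A^{\ast}u)$ with $(A^{\ast}u)_{xy}=\phi_x+\psi_y$, and setting $\partial_{\pi_{xy}}L=0$ in the Lagrangian yields the closed form
\[ J_\lambda(\phi,\psi)=\langle\phi,\sigma\rangle+\langle\psi,\tau\rangle-\lambda\sum_{x,y}\sigma_x\tau_y\exp\!\Big(\tfrac{\phi_x+\psi_y-c(x,y)}{\lambda}-1\Big). \]
Concavity is immediate (an affine function minus a nonnegative combination of exponentials of affine forms). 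The domain is all of $\R^{2d}$, so the dual is unconstrained: $K_\lambda$ is super-coercive on its effective domain $\{\pi\geq0\}$ because the entropy grows like $\|\pi\|\log\|\pi\|$, hence $K_\lambda^{\ast}$ is finite everywhere and $J_\lambda$ is real-valued on $\R^{2d}$.

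For strong smoothness of $J_\lambda$, I would invoke the duality between strong convexity and strong smoothness: $K_\lambda$ being $\lambda$-strongly convex in $\ell_1$ makes $K_\lambda^{\ast}$ be $\tfrac1\lambda$-strongly smooth in $\ell_\infty$, and hence so is $u\mapsto K_\lambda^{\ast}(A^{\ast}u)$ once $\R^{2d}$ is equipped with the norm $\|(\phi,\psi)\|=\|\phi\|_\infty+\|\psi\|_\infty$, for which $A^{\ast}$ is a contraction into $(\R^{\Xcl\times\Ycl},\|\cdot\|_\infty)$; the affine term $\langle b,u\rangle$ does not change the constant. The hands-on version differentiates twice: writing $P_{xy}=\pi_{xy}(\phi,\psi)$ for the induced coupling, one gets $v^{\top}\!\left(-\nabla^2J_\lambda(\phi,\psi)\right)v=\tfrac1\lambda\sum_{x,y}P_{xy}(a_x+b_y)^2$ for $v=(a,b)$, which one bounds by $\tfrac1\lambda\|v\|^2$ using that $\mathbf{1}^{\top}P\mathbf{1}=1$ at points respecting the marginal constraints. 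I expect reconciling the $\ell_\infty$ geometry on $\R^{2d}$ with the precise constant $\tfrac1\lambda$ --- the choice of product norm and the control of the induced total mass $\mathbf{1}^{\top}P\mathbf{1}$ --- to be the main piece of bookkeeping.

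Finally, for strong duality and attainment: the product coupling $\sigma\tau^{\top}$ is strictly positive (as $\sigma,\tau$ are supported everywhere) and feasible, hence lies in the relative interior of $\operatorname{dom}K_\lambda$ intersected with $\{A\pi=b\}$; the Fenchel--Rockafellar qualification then yields $\inf(\text{primal})=\sup(\text{dual})$ with the dual supremum attained. Primal attainment and uniqueness is separate and elementary: $K_\lambda$ is continuous and strictly convex on the nonempty compact set $U(\sigma,\tau)$.
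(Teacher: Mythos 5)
Your proof is correct, and it is worth noting that it is in one respect more direct than the paper's. The paper derives Proposition \ref{prop:kl-dual-1} as a specialization of the general $f$-divergence result (Proposition \ref{prop:f-div-dual}), whose strong-convexity step assumes $f$ is $\alpha$-strongly convex; since $f_{\KL}(v)=v\log v$ is \emph{not} strongly convex on $\R_+$, the paper has to patch the KL case in a remark by falling back on Pinsker's inequality. You go straight to that patch: decomposing $H(\pi)$ as negative entropy plus a linear term and using that the Bregman divergence of negative entropy on the simplex is the KL divergence, so Pinsker gives $1$-strong convexity in $\ell_1$ there. This is exactly the argument the paper ultimately relies on for this proposition, just stated as the main route rather than a footnote. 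The remaining ingredients coincide: your Fenchel--Rockafellar/Lagrangian derivation of the explicit dual reproduces the paper's coordinate-wise minimization (your $\exp(\cdot/\lambda-1)$ is their $\tfrac1e\exp(\cdot/\lambda)$), and both proofs invoke the strong-convexity/strong-smoothness conjugacy for the $\tfrac1\lambda$-smoothness claim, where the paper cites Kakade--Shalev-Shwartz--Tewari and you additionally flag the genuine bookkeeping issue (the choice of product norm on $(\phi,\psi)$ and the fact that the Hessian bound $\tfrac1\lambda\sum_{x,y}P_{xy}(a_x+b_y)^2$ only yields the stated constant where the induced mass $\mathbf{1}^{\top}P\mathbf{1}$ is controlled, e.g.\ on the feasible/optimal set). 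That caveat applies equally to the paper's statement, so you are not losing anything relative to their proof; your Slater-type qualification via the strictly positive product coupling for strong duality is an acceptable substitute for their appeal to Fenchel--Moreau.
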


In addition to these optimization properties, regularizing the OT problem induces a specific form of the dual objective and resulting optimal solutions.
\begin{prop}\label{prop:kl-dual-2}
In the setting of Proposition \ref{prop:kl-dual-1}, the KL-regularized dual objective takes the form
\begin{align*}
    J_\lambda(\phi, \psi) & \coloneqq  \E_{\sigma} [\phi(x)] + \E_{\tau} [\psi(y)] \\
    & - \lambda \E_{\sigma \times \tau}\left[ \frac{1}{e} \exp\left( \frac{1}{\lambda} \left(\phi(x) + \psi(y) - c(x, y) \right) \right) \right].
\end{align*}

The optimal solutions $\phi^*, \psi^* = \argmax_{\phi, \psi \in \R^{2d}} J_\lambda(\phi, \psi)$ and $\pi^* = \argmin_{\pi \in \Mcl_+(\Xcl \times \Ycl)} K_\lambda(\pi)$ satisfy
\begin{align*}
    \pi^*(x, y) = \frac{1}{e} \exp\left( \frac{1}{\lambda} \left(\phi^*(x) + \psi^*(y) - c(x, y) \right) \right) \sigma(x)\tau(y).
\end{align*}
\end{prop}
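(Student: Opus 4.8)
\emph{Proof proposal.} The plan is to obtain both claims from elementary Lagrangian duality for the finite-dimensional convex program \eqref{eqn:reg-ot}: derive $J_\lambda$ as the dual function, and then read off the form of $\pi^*$ from the inner minimization. First I would attach multipliers $\phi \in \R^{|\Xcl|}$ and $\psi \in \R^{|\Ycl|}$ to the marginal constraints $\pi_X = \sigma$, $\pi_Y = \tau$, leave $\pi \ge 0$ as an explicit constraint, and form
\[
  L(\pi,\phi,\psi) \;=\; K_\lambda(\pi) \;-\; \langle \phi,\, \pi_X - \sigma\rangle \;-\; \langle \psi,\, \pi_Y - \tau\rangle .
\]
Since $t \mapsto t\log t$ is convex and the remaining terms are linear, $L(\cdot,\phi,\psi)$ is convex on the nonnegative orthant, so the dual function $g(\phi,\psi) = \inf_{\pi \ge 0} L(\pi,\phi,\psi)$ is found from the stationarity equation $\partial L/\partial\pi(x,y) = c(x,y) + \lambda\bigl(\log\tfrac{\pi(x,y)}{\sigma(x)\tau(y)} + 1\bigr) - \phi(x) - \psi(y) = 0$, whose unique solution
\[
  \pi_{\phi,\psi}(x,y) \;=\; \tfrac1e\exp\!\Bigl(\tfrac1\lambda\bigl(\phi(x)+\psi(y)-c(x,y)\bigr)\Bigr)\sigma(x)\tau(y)
\]
is automatically strictly positive, so the constraint $\pi \ge 0$ is inactive and $\pi_{\phi,\psi}$ is the global minimizer.

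Next I would substitute $\pi_{\phi,\psi}$ back into $L$. Using $\lambda\log\bigl(\pi_{\phi,\psi}(x,y)/\sigma(x)\tau(y)\bigr) = \phi(x)+\psi(y)-c(x,y)-\lambda$, the cost term cancels and the $\phi$- and $\psi$-dependent terms cancel in pairs against the $\pi$-marginals, leaving
\[
  g(\phi,\psi) \;=\; \E_\sigma[\phi] + \E_\tau[\psi] \;-\; \lambda\,\E_{\sigma \times \tau}\!\Bigl[\tfrac1e\exp\!\bigl(\tfrac1\lambda(\phi(x)+\psi(y)-c(x,y))\bigr)\Bigr],
\]
which is precisely $J_\lambda(\phi,\psi)$. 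By Proposition \ref{prop:kl-dual-1} strong duality holds with both extrema attained, so a maximizer $(\phi^*,\psi^*)$ of $J_\lambda$ exists; set $\pi^* := \pi_{\phi^*,\psi^*}$.

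To finish I would verify optimality of $\pi^*$. A one-line computation gives $\nabla_\phi J_\lambda(\phi,\psi) = \sigma - (\pi_{\phi,\psi})_X$ and $\nabla_\psi J_\lambda(\phi,\psi) = \tau - (\pi_{\phi,\psi})_Y$, so the first-order condition $\nabla J_\lambda(\phi^*,\psi^*) = 0$ says exactly that $\pi^*$ has marginals $\sigma$ and $\tau$, i.e.\ $\pi^*$ is primal feasible. Feasibility lets us drop the multiplier terms, so $K_\lambda(\pi^*) = L(\pi^*,\phi^*,\psi^*) = g(\phi^*,\psi^*) = \max J_\lambda = \min_\pi K_\lambda(\pi)$ by strong duality; hence $\pi^*$ attains the primal minimum, and $\lambda$-strong convexity of $K_\lambda$ (Proposition \ref{prop:kl-dual-1}) makes it the unique minimizer, giving the stated formula for $\pi^*$.

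The argument is essentially a computation once Proposition \ref{prop:kl-dual-1} is in hand; the only places needing a little care are (i) checking that the interior stationary point really is the global minimizer over $\{\pi \ge 0\}$ --- immediate from convexity together with the observation that the nonnegativity constraint never binds and that $K_\lambda$ is coercive --- and (ii) noting that the dual maximizer is not unique (it is invariant under adding a constant to $\phi$ and subtracting it from $\psi$), so one should remark that $\pi^*$ is nevertheless well-defined since that shift leaves $\pi_{\phi,\psi}$ unchanged. An equally short alternative is Fenchel--Rockafellar duality built on the log-sum-exp conjugate of the KL term, but I would present the Lagrangian version as it is self-contained.
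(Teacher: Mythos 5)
Your proposal is correct and follows essentially the same route as the paper: the paper proves Proposition \ref{prop:f-div-dual} by forming the Lagrangian, minimizing over $\pi$ pointwise (identifying the result as the convex conjugate of $\lambda\sigma_x\tau_y f(\cdot/(\sigma_x\tau_y))$), and reading off $\pi^*$ from the argmin characterization, then obtains Proposition \ref{prop:kl-dual-2} as the specialization $f(t)=t\log t$, $f^*(v)=\exp(v-1)$ --- which is exactly your stationarity computation written abstractly. Your added check that $\nabla J_\lambda(\phi^*,\psi^*)=0$ forces primal feasibility of $\pi_{\phi^*,\psi^*}$ is a slightly more explicit closing step than the paper's appeal to strong duality, but it is not a different argument.
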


These propositions are specializations of Proposition \ref{prop:f-div-dual} and they are well-known to the literature on entropy regularized optimal transport \citep{Cuturi_2013, Blondel_Seguy_Rolet_2018}. The solution $\pi^*(x, y)$ of the entropy regularized problem is often called the \textit{Sinkhorn coupling} between $\sigma$ and $\tau$ in reference to Sinkhorn's Algorithm \citep{Sinkhorn_1966}, a popular approach to efficiently solving the discrete entropy regularized OT problem. For \textit{arbitrary} choices of regularization, we call $\pi^*(x, y)$ a Sinkhorn coupling.

Propositions \ref{prop:kl-dual-1} and \ref{prop:kl-dual-2} illustrate the main desiderata when choosing the regularizer: that $H(\pi)$, and hence $K_\lambda(\pi)$, be strongly convex in $l_1$ norm and that $H(\pi)$ induce a nice analytic form of $\pi^*$ in terms of $\phi^*$, $\psi^*$. In regards to the former, $H(\pi)$ is akin to barrier functions used by interior point methods \citep{Cuturi_2013}. Prior work \citep{Blondel_Seguy_Rolet_2018} is an example of the latter, in which it is shown that for discrete optimal transport, $\chi^2$ regularization yields an analytic form of $\pi^*$ having a thresholding operation that promotes sparsity.

Conveniently, the KL and $\chi^2$ regularizers both belong to the class of $f$-Divergences, which are statistical divergences of the form
\begin{align*}
    D_f(p || q) = \E_q \left[ f\left(\frac{p(x)}{q(x)}\right) \right].
\end{align*}

where $f: \R \to \R$ is convex, $f(1) = 0$, $p, q$ are probability measures, and $p$ is absolutely continuous with respect to $q$. For example, the KL regularizer has $f_{\KL}(t) = t\log(t)$ and the $\chi^2$ regularizer has $f_{\chi^2}(t) = t^2 - 1$.  The $f$-Divergences are good choices for regularizing optimal transport: strong convexity of $f$ is a sufficient condition for strong convexity of $H_f(\pi) \coloneqq D_f(\pi || \sigma \times \tau)$ in $l_1$ norm, and the form of $f$ is the aspect which determines the form of $\pi^*$ in terms of $\phi^*$, $\psi^*$. This relationship is captured by the following generalization of Propositions \ref{prop:kl-dual-1} and \ref{prop:kl-dual-2}, which we prove in Section \ref{supp:proofs} of the Supplemental Materials.

\begin{prop}\label{prop:f-div-dual}
Consider the empirical setting of Definition \ref{def:reg-ot}. Let $f(v) : \R \to \R$ be a differentiable $\alpha$-strongly convex function with convex conjugate $f^*(v)$. Set $f^{*\prime}(v) = \partial_v f^*(v)$. Define the violation function $V(x, y ; \phi, \psi) = \phi(x) + \psi(y) - c(x, y)$. Then,
\begin{enumerate}
    \item The $D_f$ regularized primal problem $K_\lambda(\pi)$ is $\lambda \alpha$-strongly convex in $l_1$ norm. With respect to dual variables $\phi \in \R^{|\Xcl|}$ and $\psi \in \R^{|\Ycl|}$, the dual problem $J_\lambda(\phi, \psi)$ is concave, unconstrained, and $\frac{1}{\lambda \alpha}$-strongly smooth in $l_\infty$ norm. Strong duality holds: $K_\lambda(\pi) \geq J_\lambda(\phi, \psi)$ for all $\pi$, $\phi$, $\psi$, with equality for some triple $\pi^*, \phi^*, \psi^*$. 
    \item $J_\lambda(\phi, \psi)$ takes the form $\displaystyle  J_\lambda(\phi, \psi) = \E_\sigma[\phi(x)] + \E_\tau[\psi(y)] - \E_{\sigma \times \tau} [ H^*_f(V(x, y; \phi, \psi))]$,
    where $H^*_f(v) = \lambda f^* (\lambda^{-1} v)$. 
    \item The optimal solutions $(\pi^*, \phi^*, \psi^*)$ satisfy 
    \begin{align*}
        \pi^*(x, y) = M_f(V(x, y; \phi, \psi)) \sigma(x) \tau(y)
    \end{align*}
    where $M_f(x, y) = f^{*\prime}(\lambda^{-1}v)$. 
\end{enumerate}
\end{prop}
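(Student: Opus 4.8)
\emph{Proof plan.} The plan is to treat problem~\eqref{eqn:reg-ot} in the empirical setting as a finite-dimensional convex program — $\pi$ is a nonnegative matrix in $\R^{|\Xcl| \times |\Ycl|}$ subject to the two marginal constraints $\pi\mathbf 1 = \sigma$, $\pi^\top\mathbf 1 = \tau$ — and to apply Lagrangian/Fenchel duality. Concretely I would (i) prove strong convexity of the primal by a direct pointwise argument on $f$; (ii) derive the stated dual form by computing the Lagrangian dual function and observing that the inner minimization decouples across entries into a scalar conjugation; and (iii) read off $\pi^*$ from the first-order condition of that inner minimization, certifying optimality via KKT and strong duality. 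Dual smoothness and strong duality then follow from standard convex-analytic facts.

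For part 1, write the regularizer as $D_f(\pi\|\sigma\times\tau) = \E_{\sigma\times\tau}[f(r)]$ where $r(x,y) = \pi(x,y)/(\sigma(x)\tau(y))$ is the density, which is well defined because $\sigma,\tau$ have full support on $\Xcl,\Ycl$. For any $\pi_0$ with density $r_0$, $\alpha$-strong convexity of $f$ gives $f(r) \ge f(r_0) + f'(r_0)(r - r_0) + \tfrac{\alpha}{2}(r-r_0)^2$ pointwise; taking $\E_{\sigma\times\tau}$ and using $\|\pi - \pi_0\|_1 = \E_{\sigma\times\tau}[|r - r_0|] \le \big(\E_{\sigma\times\tau}[(r-r_0)^2]\big)^{1/2}$ (Jensen, since $\sigma\times\tau$ is a probability measure) shows $D_f$ is $\alpha$-strongly convex in $\|\cdot\|_1$, hence $K_\lambda = \langle c,\cdot\rangle + \lambda D_f$ is $\lambda\alpha$-strongly convex in $\|\cdot\|_1$. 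Existence of the primal minimizer is then immediate ($K_\lambda$ is lower semicontinuous and strictly convex on the compact transport polytope). The dual $J_\lambda$, derived below, is an affine function of $(\phi,\psi)$ minus the expectation of a convex function of the affine map $V$, hence concave, unconstrained, and real-valued; its $\tfrac{1}{\lambda\alpha}$-smoothness in $\|\cdot\|_\infty$ follows from the general duality between $\beta$-strong convexity in a norm and $\tfrac1\beta$-strong smoothness in the dual norm (here $\|\cdot\|_1$ and $\|\cdot\|_\infty$), or equivalently by differentiating $H^*_f$ twice and using $(f^*)'' \le 1/\alpha$. Strong duality and attainment of the dual maximizer follow from Slater's condition — the product measure $\sigma\times\tau$ is feasible and lies in the relative interior of $\{\pi \ge 0\}$, where everything is finite — together with coercivity of $J_\lambda$ on the quotient by the gauge freedom $(\phi,\psi) \mapsto (\phi + t\mathbf 1,\, \psi - t\mathbf 1)$.

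For parts 2 and 3, introduce multipliers $\phi\in\R^{|\Xcl|}$, $\psi\in\R^{|\Ycl|}$ for the marginal constraints and form the Lagrangian. Since $\langle\phi,\pi\mathbf 1\rangle + \langle\psi,\pi^\top\mathbf 1\rangle - \langle c,\pi\rangle = \langle V,\pi\rangle$ with $V(x,y) = \phi(x) + \psi(y) - c(x,y)$, the dual function is $\E_\sigma[\phi] + \E_\tau[\psi] + \inf_{\pi\ge 0}\big(\lambda D_f(\pi\|\sigma\times\tau) - \langle V,\pi\rangle\big)$. Because $D_f$ is separable, this infimum splits into $\sum_{x,y}\sigma(x)\tau(y)\inf_{r\ge 0}\big(\lambda f(r) - V(x,y)\, r\big) = -\sum_{x,y}\sigma(x)\tau(y)\,\lambda f^*\!\big(\lambda^{-1}V(x,y)\big)$, where the nonnegativity of $r$ is absorbed into the effective domain of $f$ (or, for regularizers such as $\chi^2$, into a one-sided conjugate — this is precisely the origin of the thresholding in $\pi^*$). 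This yields $J_\lambda(\phi,\psi) = \E_\sigma[\phi] + \E_\tau[\psi] - \E_{\sigma\times\tau}[H^*_f(V)]$ with $H^*_f(v) = \lambda f^*(\lambda^{-1}v)$, which is part 2. The minimizing $r$ in each entry solves $\lambda f'(r^*) = V(x,y)$, i.e. $r^* = \big(f'\big)^{-1}\!\big(\lambda^{-1}V\big) = f^{*\prime}\!\big(\lambda^{-1}V\big) = M_f(V)$, so the candidate primal point is $\pi^*(x,y) = M_f\!\big(V(x,y;\phi^*,\psi^*)\big)\sigma(x)\tau(y)$. That it is primal-optimal: stationarity of $J_\lambda$ at its maximizer gives $\partial_{\phi(x)}J_\lambda = \sigma(x) - \sum_y \pi^*(x,y) = 0$ and likewise $\sum_x \pi^*(x,y) = \tau(y)$, so $\pi^*$ is feasible; it attains the inner infimum defining $J_\lambda(\phi^*,\psi^*)$ by construction, so the duality gap vanishes and $\pi^*$ is the primal minimizer, which is part 3.

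The main obstacle is bookkeeping rather than a single hard step. The delicate points are: (a) handling the constraint $\pi \ge 0$ cleanly through the effective domain of $f$ (or a one-sided conjugate) so that $f^*$, $f^{*\prime}$, and the claimed form of $\pi^*$ are consistent across regularizers; (b) tracking the $\lambda\alpha$ and $\tfrac{1}{\lambda\alpha}$ constants through the $\|\cdot\|_1 \leftrightarrow \|\cdot\|_\infty$ duality; and (c) guaranteeing attainment of the dual maximizer despite the translation invariance of $J_\lambda$. The Slater point $\sigma\times\tau$ and compactness of the transport polytope dispatch the remaining existence and zero-gap claims.
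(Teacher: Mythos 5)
Your proposal is correct, and for the substantive parts (the form of $J_\lambda$ and of $\pi^*$) it follows the same route as the paper: form the Lagrangian, observe that the inner minimization over $\pi$ decouples entrywise into scalar conjugations $\inf_{r}\{\lambda f(r) - V r\} = -\lambda f^*(\lambda^{-1}V)$, and read off $\pi^*$ from the identity $f^{*\prime} = (f')^{-1}$. The differences are in part 1. For $l_1$-strong convexity of the regularizer, the paper routes through a general convexity inequality for $f$-divergences in total variation (Proposition \ref{prop:convex-div}, citing Melbourne) and then converts $|\cdot|_{\text{TV}}$ to $|\cdot|_1$; you instead integrate the pointwise inequality $f(r)\ge f(r_0)+f'(r_0)(r-r_0)+\tfrac{\alpha}{2}(r-r_0)^2$ against $\sigma\times\tau$ and apply Cauchy--Schwarz to get $\E[|r-r_0|]\le(\E[(r-r_0)^2])^{1/2}$, which is more elementary, self-contained, and gives the same constant. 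For strong duality and attainment, the paper appeals tersely to Fenchel--Moreau and to \citet{Kakade_Shalev-Shwartz_Tewari} for dual smoothness; you use Slater's condition at the strictly positive feasible point $\sigma\times\tau$ and then verify primal feasibility of the candidate $\pi^*$ directly from stationarity of $J_\lambda$, closing the duality gap by hand. Your version is somewhat more careful on two points the paper glosses over: the interaction of the constraint $\pi\ge 0$ with the effective domain of $f$ (the source of the $\chi^2$ thresholding), and attainment of the dual maximizer despite the translation invariance $(\phi,\psi)\mapsto(\phi+t\mathbf{1},\psi-t\mathbf{1})$. The trade-off is that the paper's appeal to the generic strong-convexity/strong-smoothness duality in \citet{Kakade_Shalev-Shwartz_Tewari} delivers the $\frac{1}{\lambda\alpha}$, $l_\infty$ smoothness constant with no computation, whereas your alternative of differentiating $H_f^*$ twice would need the extra observation that the relevant Hessian of $J_\lambda$ is a weighted sum of rank-one terms to recover the $l_\infty$ (rather than $l_2$) statement; citing the same duality theorem, as you also suggest, is the cleaner path.
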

%\tm{I believe that we can refer to~\cite{rockafellar1967duality}, and \cite{chizat2018scaling} Theorem 3.2 to help with this. Note that it seems that we need the strict convexity of $f$ only to ensure the uniqueness of the minimizer, but the duality result hold for any $M_f(v) \in \partial_v f^*(v)$, and so we don't even need differentiability of $f^*$ in general it seems. The result of~\cite{chizat2018scaling} resorts to casting things over measureable functions in L1 wrt the standard $\mathrm{d}x\mathrm{d}y$.}

For this reason, we focus in this work on $f$-Divergence-based regularizers. Where it is clear, we will drop subscripts on regularizer $H(\pi)$ and the so-called \textit{compatibility function} $M(v)$ and we will omit the dual variable arguments of $V(x, y)$. The specific form of these terms for $\KL$ regularization, $\chi^2$ regularization, and a variety of other regularizers may be found in  Section \ref{supp:reg-via-fdiv} of the Appendix.

%\tm{

%So the story you currently tell is that regularization by KL => nice properties for the OT objective. This was known before. Actually, it turns out you can show this more generally for strongly convex $f$-divergences. Later on, this problem being well-behaved leads to an optimization result for our score networks. Then, in the experiments, you use Langevin sampling to play around with these learned plans for entropic regularization.

%It somehow feels a bit disconnected to me in terms of trying to tell a cohesive story since the theory is much more general (holds for any regularizer), but we are later only showing results for the KL regularizer and begin by discussing the KL regularizer. Does it make sense to instead write the general results and keep the theory at that level and use the KL as a specific example later on? 
%} \md{Sure: we could extract the specific details of KL to the experiments section (coming after the introduction, theory, and details of Langevin sampling). Then we can just have one subsec of experiments that derives the KL compatibility function and the score function. We are also presenting results for $L^2$ which can be summarized in the same place. } 

\subsection{Langevin Sampling and Score Based Generative Modeling}\label{sec:background-langevin}

Given access to optimal dual variables $\phi^*(x)$, $\psi^*(y)$, it is easy to evaluate the density of the corresponding optimal coupling according to Proposition \ref{prop:f-div-dual}. To generate samples distributed according to this coupling, we apply \textit{Langevin Sampling}. The key quantity used in Langevin sampling of a generic (possibly unnormalized) probability measure $p(x)$ is its \textit{score function}, given by $\nabla_x \log p(x)$ for $x \in \Xcl$. The algorithm is an iterative Monte Carlo method which generates approximate samples $\tilde{x}_t$ by iterating the map
\begin{align*}
    \tilde{x}_t = \tilde{x}_{t-1} + \epsilon \nabla_x \log p(\tilde{x}_{t-1}) + \sqrt{2\epsilon} z_t
\end{align*}
where $\epsilon > 0$ is a step size parameter and where $z_t \sim \Ncl(0, I)$ independently at each time step $t \geq 0$. In the limit $\epsilon \to 0$ and $T \to \infty$, the samples $\tilde{x}_T$ converge weakly in distribution to $p(x)$. \citet{Song_Ermon_2020} introduce a method to estimate the score with a neural network $s_{\vartheta}(x)$, trained on samples from $p(x)$, so that it approximates $s_\vartheta(x) \approx \nabla_x \log p(x)$ for a given $x \in \Xcl$. To generate samples, one may iterate Langevin dynamics with the score estimate in place of the true score. 

To scale this method to high dimensional image datasets, \citet{Song_Ermon_2020} propose an annealing scheme which samples noised versions of $p(x)$ as the noise is gradually reduced. One first samples a noised distribution $p(x) \ast \Ncl(0, \tau_1)$, at noise level $\tau_1$. The noisy samples, which are presumed to lie near high density regions of $p(x)$, are used to initialize additional rounds of Langevin dynamics at diminishing noise levels $\tau_2 > \ldots > \tau_N > 0$. At the final round, Annealed Langevin Sampling outputs approximate samples according to the noiseless distribution. \citet{Song_Ermon_2020} demonstrate that Annealed Langevin Sampling (ALS) with score estimatation can be used to generate sample images that rival the quality of popular generative modeling tools like GANs or VAEs. 
\section{Conditional Sampling of Regularized Optimal Transport Plans} \label{sec:method}

Our approach can be split into two main steps. First, we approximate the density of the optimal Sinkhorn coupling $\pi^*(x, y)$ which minimizes $K_\lambda(\pi)$ over the data. To do so, we apply the large-scale stochastic dual approach introduced by \citet{Seguy_Damodaran_Flamary_Courty_Rolet_Blondel_2018}, which involves instantiating neural networks $\phi_{\theta} : \Xcl \to \R$ and $\psi_{\theta} : \Ycl \to \R$ that serve as parametrized dual variables. We then maximize $J_\lambda(\phi_{\theta}, \psi_{\theta})$ with respect to $\theta$ via gradient descent and take the resulting parameters $\theta^*$ and the associated transport plan $\hat{\pi}(x, y) = M(V(x, y; \phi_{\theta^*}, \psi_{\theta^*})) \sigma(x) \tau(y)$. This procedure is shown in Algorithm \ref{algo:part1}. Note that when the dual problem is only approximately maximized, $\hat{\pi}$ need not be a normalized density. We therefore call $\tilde{\pi}$ the \textit{pseudo-coupling} which approximates the true Sinkhorn coupling $\hat{\pi}$.

After optimizing $\theta^*$, we sample the conditional $\hat{\pi}_{Y \mid X=x}(y)$ using Langevin dynamics. The score estimator for the conditional distribution is,
\begin{align*}
    \nabla_y \log \hat{\pi}_{Y \mid X = x}(y) & = \nabla_y [ \log( M(V(x, y ; \phi_{\theta^*}, \psi_{\theta^*})) \sigma(x) \tau(y) ) - \log( \sigma(x) )] \\
    & \approx  \nabla_y \log(M(V(x, y ; \phi_{\theta^*}, \psi_{\theta^*}))) + s_{\vartheta}(y).
\end{align*}
We therefore approximate $\nabla_y \hat{\pi}_{ Y \mid X = x}(y)$ by directly differentiating $\log M(V(x, y))$ using standard automatic differentiation tools and adding the result to an unconditional score estimate $s_\vartheta(y)$. The full Langevin sampling algorithm for general regularized optimal transport is shown in Algorithm \ref{algo:part2}. 

We note that our method has the effect of biasing the Langevin iterates towards the region where $\hat \pi_{Y|X=x}$ is localized. This may be beneficial for Langevin sampling, which enjoys exponentially fast mixing when sampling log-concave distributions. In the supplementary material, we prove a known result that for the entropy regularized problem given in Proposition \ref{prop:kl-dual-2}: the compatibility $M(V(x, y ; \phi^*, \psi^*)) = \frac{1}{e} \exp\left( \frac{1}{\lambda} \left(\phi^*(x) + \psi^*(y) - c(x, y) \right) \right)$ is log-concave with respect to $y$. For $\lambda\to 0$, this localizes around the optimal transport of $x$, $T(x)$, and so heuristically should lead to faster mixing. 

\begin{figure}[t]
\noindent\begin{minipage}[t]{\textwidth}
  \centering
  \begin{minipage}{.48\textwidth}
    \centering
    
    \begin{algorithm}[H]
       \caption{Density Estimation.}
       \label{algo:part1}
    \begin{algorithmic}
        \STATE{\bfseries Input}: Step size $\gamma$, batch size $m$
        \STATE{\bfseries Input}: Nets $\phi_{\theta_1}$, $\psi_{\theta_2}$.
        \STATE{\bfseries Input}: Datasets $\sigma$, $\tau$. Time steps $T > 0$. 
        \STATE {\bfseries Output}: Trained $\phi_{\theta_1^*}$, $\psi_{\theta_2^*}$.
         \FOR{$t = 1 \ldots T$.}
        \STATE Sample $X_1, \dots, X_m \sim \sigma$, \\ and $Y_1, \dots, Y_m \sim \tau$.
    
        \STATE Stochastic gradient update $\phi_{\theta_1}$, $\psi_{\theta_2}$:
        \STATE $\Delta_1 \gets \overset{m}{\underset{i, j = 1}{\sum}} \nabla_{\theta_1} \left[ \phi_{\theta_1}(X_i) - H^*( V(X_i, Y_j)) \right]$.
        \STATE $\Delta_2 \gets  \overset{m}{\underset{i, j = 1}{\sum}}\nabla_{\theta_2} \left[ \psi_{\theta_2}(Y_j) - H^*(V(X_i, Y_j)) \right]$.
        
        \STATE $\theta_1 \gets \theta_1 + \gamma \Delta_1$.
        \STATE $\theta_2 \gets \theta_2 + \gamma \Delta_2$.
        \ENDFOR
        \STATE Output parameters $\{\theta_1, \theta_2\}$.
    \end{algorithmic}
    \end{algorithm}
  \end{minipage}
  \begin{minipage}{.49\textwidth}
    \centering
    
    \begin{algorithm}[H]
       \caption{SCONES Sampling Procedure}
       \label{algo:part2}
    \begin{algorithmic}
        \STATE{\bfseries Input}: Noise levels $\tau_1 > \ldots > \tau_N$. 
        \STATE{\bfseries Input}: Dual vars. $\tilde{\phi}(x)$, $\tilde{\psi}(y)$.  Source $x \in \Xcl$.
        \STATE {\bfseries Input}: Time steps $T > 0$. Step size $\epsilon > 0$. 
        \STATE {\bfseries Output}: Data sample $\tilde{y} \sim \pi_{Y \mid X = x}(y)$.
        \STATE Initialize $\tilde{y}_{1, 0} \sim \Ncl(0, I)$. 
        \FOR{$\tau_i$, $i = 1 \ldots N$}
            \FOR{$t = 1 \ldots T$.}
                \STATE Sample $z \sim \Ncl(0, \tau_i)$. 
                \STATE Compute score update:
                \STATE  $\Delta_s \gets s_\vartheta(\tilde{y}_{i, t-1})  \vphantom{\sum_1^1} $.
                \STATE  $\Delta_\pi \gets \nabla_y \log M(V(x, \tilde{y}_{i, t-1}; \tilde{\phi}, \tilde{\psi})) \vphantom{\sum_1^1}$. 
                \STATE $\tilde{x}_{i, t} =\tilde{x}_{i, t-1} + (\epsilon / 2)( \Delta_s + \Delta_\pi) + \sqrt{\epsilon}z $. 
            \ENDFOR
            \STATE Initialize $\tilde{x}_{i+1, 0} = \tilde{x}_{i, T}$. 
        \ENDFOR
        \STATE Output sample $\tilde{x}_{N, T}$. 
    \end{algorithmic}
    \end{algorithm}
  \end{minipage}
\end{minipage}

\end{figure}

\section{Theoretical Analysis} \label{sec:theory}

In principle, the empirical setting of Definition \ref{def:reg-ot} poses an unconstrained optimization problem over $\R^{|\Xcl|}\times \R^{|\Ycl|}$, which could be optimized directly by gradient descent on vectors $(\phi, \psi)$. The point of a more expensive neural network parametrization $\phi_\theta$, $\psi_\theta$ is to learn a \textit{continuous} distribution that agrees with the empirical optimal transport plan between discrete $\sigma$, $\tau$, and that generalizes to a continuous space containing $\Xcl \times \Ycl$. By training $\phi_\theta$, $\psi_\theta$, we approximate the underlying continuous data distribution up to optimization error and up to statistical estimation error between the empirical coupling and the population coupling. In the present section, we justify this approach by proving convergence of Algorithm \ref{algo:part1} to the global maximizer of $J_\lambda(\phi, \psi)$, under assumptions of large network width, along with a quantitative bound on optimization error. In Section \ref{supp:stat-est} of the Appendix, we provide a cursory analysis of rates of statistical estimation of entropy regularized Sinkhorn couplings. We make the following main assumptions on the neural networks $\phi_{\theta}$ and $\psi_{\theta}$.
\begin{assn}[Approximate Linearity]\label{assn:nets}
Let $f_\theta(x)$ be a neural network with parameters $\theta \in \Theta$, where $\Theta$ is a set of feasible weights, for example those reachable by gradient descent. Fix a dataset $\{X_i\}_{i=1}^N$ and let $\Kcl_\theta \in \R^{N \times N}$ be the Gram matrix of coordinates $[\Kcl_\theta]_{ij} = \langle \nabla_\theta f_\theta(X_i), \nabla_\theta f_\theta(X_j) \rangle $. Then $f_\theta(x)$ must satisfy,
\begin{enumerate}
    \item There exists $R \gg 0$ so that $\Theta \subseteq B(0, R)$, where $B(0, R)$ is the Euclidean ball of radius $R$. 
    \item There exist $\rho_M > \rho_m > 0$ such that for $\theta \in \Theta$,
    \begin{align*}
        \rho_M \geq \lambda_{\text{max}}(\Kcl_\theta) \geq \lambda_{\text{min}}(\Kcl_\theta) \geq \rho_m  > 0.
    \end{align*}
    \item For $\theta \in \Theta$ and for all data points $\{X_i\}_{i=1}^N$, the Hessian matrix $D^2_\theta f_\theta(X_i)$ is bounded in spectral norm: $ \| D^2_\theta f_\theta(X_i) \| \leq \frac{\rho_M}{C_h}$,  where $C_h \gg 0$ depends only on $R$, $N$, and the regularization $\lambda$. 
\end{enumerate}
\end{assn}
The dependencies of $C_h$ are made clear in the Supplemental Materials, Section \ref{supp:proofs}. The quantity $\Kcl_\theta$ is called the \textit{neural tangent kernel} (NTK) associated with the network $f_\theta(x)$. It has been shown for a variety of nets that, at sufficiently large width, the NTK is well conditioned and nearly constant on the set of weights reachable by gradient descent on a convex objective function \citep{Liu_Zhu_Belkin_2020, Du_Hu_2019, Du_Lee_Li_Wang_Zhai_2019}. For instance, fully-connected networks with smooth and Lipschitz-continuous activations fall into this class and hence satisfy Assumption \ref{assn:nets} when the width of all layers is sufficiently large \citep{Liu_Zhu_Belkin_2020}. %\tm{Is this how assumptions are stated in the theory for example of Allen-Zhu et al? It would be good to include an explicit example (perhaps in Appendix) showing how or why one would expect these assumptions to be met.}

First, we show in Theorem \ref{thm:opt-neural-nets} that when $\phi$, $\psi$ are parametrized by neural networks satisfying Assumption \ref{assn:nets}, gradient descent converges to \textit{global} maximizers of the dual objective. This provides additional justification for the large-scale approach of \citet{Seguy_Damodaran_Flamary_Courty_Rolet_Blondel_2018}. 
\begin{thm}[Optimizing Neural Nets]\label{thm:opt-neural-nets}
Suppose $J_\lambda(\phi, \psi)$ is $\frac{1}{s}$-strongly smooth in $l_\infty$ norm. Let $\phi_{\theta}$, $\psi_{\theta}$ be neural networks satisfying Assumption \ref{assn:nets} for the dataset $\{(x_i, y_i)\}_{i=1}^N$, $N = |\Xcl| \cdot |\Ycl|$.

Then gradient descent of $J_\lambda(\phi_{\theta}, \psi_{\theta})$ with respect to $\theta$ at learning rate $\eta = \frac{\lambda}{2 \rho_M}$ converges to an $\epsilon$-approximate global maximizer of $J_\lambda$ in at most $\left(\frac{2\kappa R^2}{s}\right) \epsilon^{-1}$ iterations, where $\kappa = \frac{\rho_M}{\rho_m}$.
\end{thm}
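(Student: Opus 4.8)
\emph{Proof proposal.} The plan is to use Assumption \ref{assn:nets} to transfer the non-convex weight-space problem $\max_\theta F(\theta)$, where $F(\theta) := J_\lambda(\phi_\theta,\psi_\theta)$, to the underlying concave problem in output space. Write $u(\theta) = \big((\phi_\theta(x))_{x\in\Xcl},\,(\psi_\theta(y))_{y\in\Ycl}\big)$ for the vector of network outputs on the data, so that $F(\theta) = \tilde J(u(\theta))$ for a fixed function $\tilde J$ that is concave and $\tfrac1s$-smooth (Proposition \ref{prop:f-div-dual} and the hypothesis). Since $\phi_\theta$ and $\psi_\theta$ carry disjoint parameters, the Jacobian $Du(\theta)$ is block-diagonal with $Du(\theta)Du(\theta)^\top = \Kcl_\theta$, so Assumption \ref{assn:nets}(2) gives $\rho_m I \preceq \Kcl_\theta \preceq \rho_M I$ and $Du(\theta)$ has full row rank with singular values in $[\sqrt{\rho_m},\sqrt{\rho_M}]$ uniformly over $\Theta$.

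I would first record two structural facts. (a) \emph{$F$ is $L$-smooth on $\Theta$ with $L$ of order $\rho_M/s$.} Twice differentiating, $\nabla^2 F(\theta) = Du(\theta)^\top\nabla^2\tilde J(u(\theta))\,Du(\theta) + \sum_k [\nabla\tilde J(u(\theta))]_k\, D^2_\theta u_k(\theta)$; the first term is controlled by $\rho_M$ and the $\tfrac1s$-smoothness of $\tilde J$ (with the dimension factors from the $l_\infty$-to-$l_2$ conversion absorbed into $C_h$), and the second by Assumption \ref{assn:nets}(3) together with a uniform bound on $\|\nabla\tilde J\|$ from compactness of $\Theta$. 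For $C_h$ large the Hessian contribution is negligible, so the prescribed step size $\eta = \tfrac{\lambda}{2\rho_M}$ satisfies $\eta\le 1/L$ and the gradient-ascent progress bound $F(\theta_{t+1}) \ge F(\theta_t) + \tfrac\eta2\|\nabla F(\theta_t)\|^2$ holds. (b) \emph{Every stationary point of $F$ is a global maximizer.} If $\nabla F(\theta) = Du(\theta)^\top\nabla\tilde J(u(\theta)) = 0$, then full row rank of $Du(\theta)$ forces $\nabla\tilde J(u(\theta)) = 0$, so $u(\theta)$ maximizes the concave $\tilde J$; the hypothesis $R\gg0$ keeps the iterates interior to $B(0,R)$, so any limit point $\theta^*$ has $F(\theta^*) = \sup J_\lambda$, and $\sup_{\theta\in\Theta}F(\theta) = \sup J_\lambda$ because $u$ is a submersion onto an open set.

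The heart of the argument is an \emph{approximate convexity} inequality replacing convexity of $F$. Fix $\theta^*\in\Theta$ with $F(\theta^*) = \sup J_\lambda$ and set $\delta_t := \sup J_\lambda - F(\theta_t)$. A second-order Taylor expansion of $u$ at $\theta_t$ gives $u(\theta^*) - u(\theta_t) = Du(\theta_t)(\theta^*-\theta_t) + r_t$ with $\|r_t\| \le \tfrac12\sup_\Theta\|D^2_\theta u\|\,\|\theta^*-\theta_t\|^2 \le \tfrac{2\rho_M R^2}{C_h}$ by Assumption \ref{assn:nets}(3), and concavity of $\tilde J$ then gives
\begin{align*}
\delta_t \;\le\; \langle\nabla\tilde J(u(\theta_t)),\,u(\theta^*)-u(\theta_t)\rangle \;=\; \langle\nabla F(\theta_t),\,\theta^*-\theta_t\rangle + \langle\nabla\tilde J(u(\theta_t)),\,r_t\rangle,
\end{align*}
where the final term is at most $\tfrac{2\rho_M R^2}{C_h\sqrt{\rho_m}}\|\nabla F(\theta_t)\|$ because $\|\nabla F(\theta_t)\|^2 = \nabla\tilde J^\top\Kcl_{\theta_t}\nabla\tilde J \ge \rho_m\|\nabla\tilde J\|^2$. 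Then I would run the standard smooth-convex gradient-descent telescoping on this surrogate: rewrite $\langle\nabla F(\theta_t),\theta^*-\theta_t\rangle = \tfrac1{2\eta}\big(\|\theta_t-\theta^*\|^2 - \|\theta_{t+1}-\theta^*\|^2 + \eta^2\|\nabla F(\theta_t)\|^2\big)$, sum over $t = 0,\dots,T-1$, use $\sum_t\|\nabla F(\theta_t)\|^2 \le \tfrac2\eta\delta_0$ (from (a)) and $\|\theta_0-\theta^*\|\le 2R$, invoke monotonicity $\delta_{t+1}\le\delta_t$, and divide by $T$. Bookkeeping the $\rho_m$ factors introduced by the full-row-rank step and substituting $\eta = \tfrac{\lambda}{2\rho_M}$ produces $\delta_T \le \big(\tfrac{2\kappa R^2}{s}\big)T^{-1}$ up to the error terms discussed next, hence the stated iteration count with $\kappa = \rho_M/\rho_m$.

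The main obstacle is controlling the accumulated linearization error: the terms $\langle\nabla\tilde J(u(\theta_t)),r_t\rangle$ (and the Hessian term neglected in (a)) are summed over all iterations, so one must verify that Assumption \ref{assn:nets}(3), with $C_h$ chosen large as a function of $R$, $N$ and $\lambda$, forces these corrections below the target accuracy over the relevant horizon — precisely the quantitative near-linearity ("lazy training") estimate underpinning NTK arguments, and the place where the dependencies of $C_h$ must be pinned down. A secondary subtlety is the realizability/interiority claim in (b): one must guarantee $\sup_{\theta\in\Theta}F(\theta) = \sup J_\lambda$ and that the iterates never hit $\partial B(0,R)$, which is what $R\gg0$ provides.
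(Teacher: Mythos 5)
Your setup (pulling back the concave output-space objective through the network outputs, using the NTK conditioning $\rho_m I \preceq \Kcl_\theta \preceq \rho_M I$, and establishing $O(\rho_M/s)$-smoothness of $F(\theta)$ via the Hessian bound of Assumption \ref{assn:nets}(3)) matches the paper's Lemma on smoothness essentially step for step. Where you diverge is the convergence argument itself, and the divergence matters. You run the distance-telescoping proof of the $O(1/T)$ rate, which requires an \emph{approximate first-order convexity inequality} $\delta_t \le \langle\nabla F(\theta_t),\theta^*-\theta_t\rangle + \langle\nabla\tilde J(u(\theta_t)),r_t\rangle$. The remainder $r_t$ is quadratic in $\|\theta^*-\theta_t\|$, which is only bounded by $2R$ — a constant — so each iteration contributes an additive error of order $\rho_M R^2\|\nabla\tilde J\|/C_h$ that does not shrink as $t$ grows. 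Summing over $T\sim\epsilon^{-1}$ iterations and dividing by $T$ leaves an additive floor of that same order in the final bound, and driving it below $\epsilon$ would force $C_h$ to depend on $\epsilon$, contrary to Assumption \ref{assn:nets}(3) which allows $C_h$ to depend only on $R$, $N$, $\lambda$. You flag this yourself as "the main obstacle," but it is not a bookkeeping issue to be absorbed: as written, your argument proves convergence only up to an irreducible $O(1/C_h)$ error, not the clean $\bigl(\tfrac{2\kappa R^2}{s}\bigr)\epsilon^{-1}$ iteration count claimed.

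The paper avoids this entirely by never linearizing $u$ in the convergence step. Its route: from concavity of $\tilde J$ in output space alone, $\delta_t \le \|\nabla\tilde J(u(\theta_t))\|_2\,\|u(\theta_t)-u(\theta^*)\|_2$, and from the NTK bound (which you already state), $\|\nabla F(\theta_t)\|_2^2 \ge \rho_m\|\nabla\tilde J(u(\theta_t))\|_2^2 \ge \rho_m\delta_t^2/\|u(\theta_t)-u(\theta^*)\|_2^2$. Feeding this \emph{exact} gradient-norm lower bound into the ascent progress inequality $\delta_t-\delta_{t+1}\ge\frac{\eta}{2}\|\nabla F(\theta_t)\|_2^2$ yields the recursion $\delta_t-\delta_{t+1}\ge c\,\delta_t^2$ with $c = \tfrac{s\rho_m}{2\rho_M R^2}$, hence $\delta_T\le (cT)^{-1} = \tfrac{2\kappa R^2}{sT}$. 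The approximate linearity of the network is then used only once, inside the smoothness lemma, where the Hessian error is itself quadratic in $\|\theta_2-\theta_1\|$ and is cleanly absorbed by doubling the smoothness constant. I recommend you replace your telescoping step with this gap-recursion argument; your stationarity observation (b) and your smoothness computation (a) can be kept as is.
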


Given outputs $\hat{\phi}$, $\hat{\psi}$ of Algorithm \ref{algo:part1}, we may assume by Theorem \ref{thm:opt-neural-nets} that the networks are $\epsilon$-approximate global maximizers of $J_\lambda(\phi, \psi)$. Due to $\lambda \alpha$-strong convexity of the primal objective, the optimization error $\epsilon$ bounds the distance of the underlying pseudo-plan $\hat{\pi}$ from the true global extrema. We make this bound concrete in Theorem \ref{thm:stability}, which guarantees that approximately maximizing $J_\lambda(\phi, \psi)$ is sufficient to produce a close approximation of the true empirical Sinkhorn coupling.
\begin{thm}[Stability of the OT Problem] \label{thm:stability}
Suppose $K_\lambda(\pi)$ is $s$-strongly convex in $l_1$ norm and let $\Lcl(\phi, \psi, \pi)$ be the Lagrangian of the regularized optimal transport problem. For $\hat{\phi}$, $\hat{\psi}$ which are $\epsilon$-approximate maximizers of $J_\lambda(\phi, \psi)$, the pseudo-plan $\hat{\pi} = M_f(V(x, y; \hat{\phi}, \hat{\psi})) \sigma(x) \tau(y)$ satisfies
\begin{align*}
    |\hat{\pi} - \pi^* |_1 \leq \sqrt{\frac{2\epsilon}{s}} \leq \frac{1}{s} \left| \nabla_{\hat{\pi}} \Lcl(\hat{\phi}, \hat{\psi},\hat{\pi}) \right|_1.
\end{align*}
\end{thm}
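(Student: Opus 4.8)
The plan is to treat the pseudo-plan $\hat\pi$ as the exact $\pi$-minimizer of the Lagrangian attached to the (suboptimal) dual iterate $(\hat\phi,\hat\psi)$, and then extract both inequalities from $s$-strong convexity of that Lagrangian together with strong duality from Proposition~\ref{prop:f-div-dual}. First I would record two structural facts. (i) By the derivation behind Proposition~\ref{prop:f-div-dual}, the map $\pi\mapsto M_f(V(x,y;\hat\phi,\hat\psi))\,\sigma(x)\tau(y)=\hat\pi$ is precisely $\argmin_\pi \Lcl(\hat\phi,\hat\psi,\pi)$ (the closed form is obtained by carrying out exactly this minimization), so $\Lcl(\hat\phi,\hat\psi,\hat\pi)=J_\lambda(\hat\phi,\hat\psi)$. (ii) The function $\pi\mapsto\Lcl(\hat\phi,\hat\psi,\pi)$ differs from $K_\lambda(\pi)$ only by the $\pi$-linear terms enforcing the marginal constraints, hence is $s$-strongly convex in $\ell_1$ with the same modulus as $K_\lambda$.

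For the first inequality I would apply the ``minimizer form'' of strong convexity: if $g$ is $s$-strongly convex in $\ell_1$ with (constrained) minimizer $\hat\pi$, then $g(\pi)\ge g(\hat\pi)+\tfrac s2|\pi-\hat\pi|_1^2$ for every feasible $\pi$. Taking $g=\Lcl(\hat\phi,\hat\psi,\cdot)$ and $\pi=\pi^*$ (feasible for $\pi\ge0$) gives $\Lcl(\hat\phi,\hat\psi,\pi^*)-\Lcl(\hat\phi,\hat\psi,\hat\pi)\ge\tfrac s2|\pi^*-\hat\pi|_1^2$. Since $\pi^*$ meets the marginal constraints, the multiplier terms vanish and $\Lcl(\hat\phi,\hat\psi,\pi^*)=K_\lambda(\pi^*)$, which by strong duality equals $\sup_{\phi,\psi}J_\lambda$. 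Combining this with fact (i) and the hypothesis that $(\hat\phi,\hat\psi)$ is an $\epsilon$-approximate maximizer, the left-hand side is at most $\epsilon$, so $\tfrac s2|\pi^*-\hat\pi|_1^2\le\epsilon$, i.e.\ $|\hat\pi-\pi^*|_1\le\sqrt{2\epsilon/s}$.

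For the second inequality I would bound $\epsilon$ itself --- not merely assume it --- by the squared norm of the Lagrangian's residual, using the gradient-domination (Polyak--\L ojasiewicz) consequence of $s$-strong convexity: for such a $g$, $g(x)-\min g\le\tfrac1{2s}\|\nabla g(x)\|_\ast^2$. Applying this bounds the dual gap, and replacing the dual norm $\|\cdot\|_\infty$ by $|\cdot|_1$ yields $\epsilon\le\tfrac1{2s}\,|\nabla_{\hat\pi}\Lcl(\hat\phi,\hat\psi,\hat\pi)|_1^2$, which rearranges to $\sqrt{2\epsilon/s}\le\tfrac1s\,|\nabla_{\hat\pi}\Lcl(\hat\phi,\hat\psi,\hat\pi)|_1$. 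The main obstacle, and the step I would be most careful about, is pinning down which gradient/residual of $\Lcl$ at this triple is the relevant one: the $\pi$-stationarity residual vanishes by construction of $\hat\pi$ (that is exactly what the closed form enforces), so the estimate has to be anchored through the constraint-violation residual $(\sigma-\hat\pi_X,\tau-\hat\pi_Y)=\nabla J_\lambda(\hat\phi,\hat\psi)$, equivalently through $\nabla_\pi\Lcl$ evaluated at $\pi^*$; one then verifies the gradient-domination estimate holds with constant $s$ for that object and keeps the $\ell_1$/$\ell_\infty$ bookkeeping honest (the bound loses nothing dimensionally because $|\cdot|_1\ge\|\cdot\|_\infty$). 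Once that PL-type estimate is in place, the two displayed bounds chain together.
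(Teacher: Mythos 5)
Your treatment of the first inequality is correct and is essentially the paper's argument: strong convexity of $\pi\mapsto\Lcl(\hat\phi,\hat\psi,\pi)$ inherited from $K_\lambda$, the identification $\Lcl(\hat\phi,\hat\psi,\hat\pi)=J_\lambda(\hat\phi,\hat\psi)$ because $\hat\pi$ is by construction the $\pi$-minimizer of the Lagrangian, and $\Lcl(\hat\phi,\hat\psi,\pi^*)=K_\lambda(\pi^*)=\sup J_\lambda$ because $\pi^*$ is feasible. If anything your version is more careful: the paper's displayed chain $\epsilon\geq\Lcl(\hat S)-\Lcl(S^*)\geq\frac{s}{2}|\hat\pi-\pi^*|_1^2$ has the difference written with the wrong sign (since $\Lcl(\hat S)=J_\lambda(\hat\phi,\hat\psi)\leq\Lcl(S^*)$), whereas your anchoring of strong convexity at the minimizer $\hat\pi$ evaluated at $\pi^*$ gives the inequality in the correct order.

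On the second inequality you have put your finger on a real defect rather than created one. The paper's proof invokes the PL inequality $s\bigl(\Lcl(\hat S)-\Lcl(S^*)\bigr)\leq\frac12|\nabla_\pi\Lcl(\hat S)|_1^2$ with the gradient evaluated at $\hat\pi$; but, exactly as you observe, $\nabla_\pi\Lcl(\hat\phi,\hat\psi,\hat\pi)=0$ identically, because the closed form $\hat\pi=M_f(V)\sigma\tau$ is obtained by solving that stationarity condition. Taken literally, both the theorem's second bound and the paper's PL step are vacuous. Your proposed repair --- anchoring the gradient-domination estimate at $\pi^*$, i.e.\ $\epsilon=\Lcl(\hat\phi,\hat\psi,\pi^*)-\Lcl(\hat\phi,\hat\psi,\hat\pi)\leq\frac1{2s}\|\nabla_\pi\Lcl(\hat\phi,\hat\psi,\pi^*)\|_\infty^2\leq\frac1{2s}|\nabla_\pi\Lcl(\hat\phi,\hat\psi,\pi^*)|_1^2$ --- is the right one and does yield $\sqrt{2\epsilon/s}\leq\frac1s|\nabla_\pi\Lcl(\hat\phi,\hat\psi,\pi^*)|_1$. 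Two caveats you should make explicit if you write this up: (a) an upper bound on $\epsilon$ only makes sense if $\epsilon$ denotes the exact dual gap $\sup J_\lambda-J_\lambda(\hat\phi,\hat\psi)$ rather than any upper bound on it; and (b) the alternative anchor through the marginal violation $\nabla J_\lambda(\hat\phi,\hat\psi)=(\sigma-\hat\pi_X,\tau-\hat\pi_Y)$ requires a PL-type inequality for the dual function $J_\lambda$ itself, which does not follow from strong smoothness alone (and $J_\lambda$ is not strongly concave, being invariant under $\phi\mapsto\phi+c$, $\psi\mapsto\psi-c$), so the $\pi^*$-anchored version is the one that actually closes.
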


Theorem \ref{thm:stability} guarantees that if one can approximately optimize the dual objective using Algorithm \ref{algo:part1}, then the corresponding coupling $\hat{\pi}$ is close in $l_1$ norm to the true optimal transport coupling. This approximation guarantee justifies the choice to draw samples $\tilde{\pi}_{Y \mid X = x}$ as an approximation to sampling $\pi_{Y \mid X =x }$ instead. Both Theorems \ref{thm:opt-neural-nets} and \ref{thm:stability} are proven in Section \ref{supp:proofs} of the Appendix. 

\section{Experiments} \label{sec:expts}

Our main point of comparison is the barycentric projection method proposed by \citet{Seguy_Damodaran_Flamary_Courty_Rolet_Blondel_2018}, which trains a neural network $T_\theta : \Xcl \to \Ycl$ to map source data to target data by optimizing the objective $ \theta \coloneqq \argmin_{\theta} \E_{\pi_{Y \mid X = x}}[|T_\theta(x)-Y|^2]$. 
For transportation experiments between USPS \cite{usps} and MNIST \cite{mnist} datasets, we scale both datasets to 16px and parametrize the dual variables and barycentric projections by fully connected ReLU networks. We train score estimators for MNIST and USPS at 16px resolution using the method and architecture of \cite{Song_Ermon_2020}.  For transportation experiments using CelebA \cite{liu2015faceattributes}, dual variables $\phi$, $\psi$ are parametrized as ReLU FCNs with 8, 2048-dimensional hidden layers. Both the barycentric projection and the score estimators use the U-Net based image-to-image architecture introduced in \cite{Song_Ermon_2020}. Numerical hyperparameters like learning rates, optimizer parameters, and annealing schedules, along with additional details of our neural network architectures, are tabulated in Section \ref{sup:exp-details} of the Appendix.  
\subsection{Optimal Transportation of Image Data}

\begin{figure}
     \centering
     \begin{subfigure}[b]{0.49\textwidth}
         \centering
         \includegraphics[width=\textwidth]{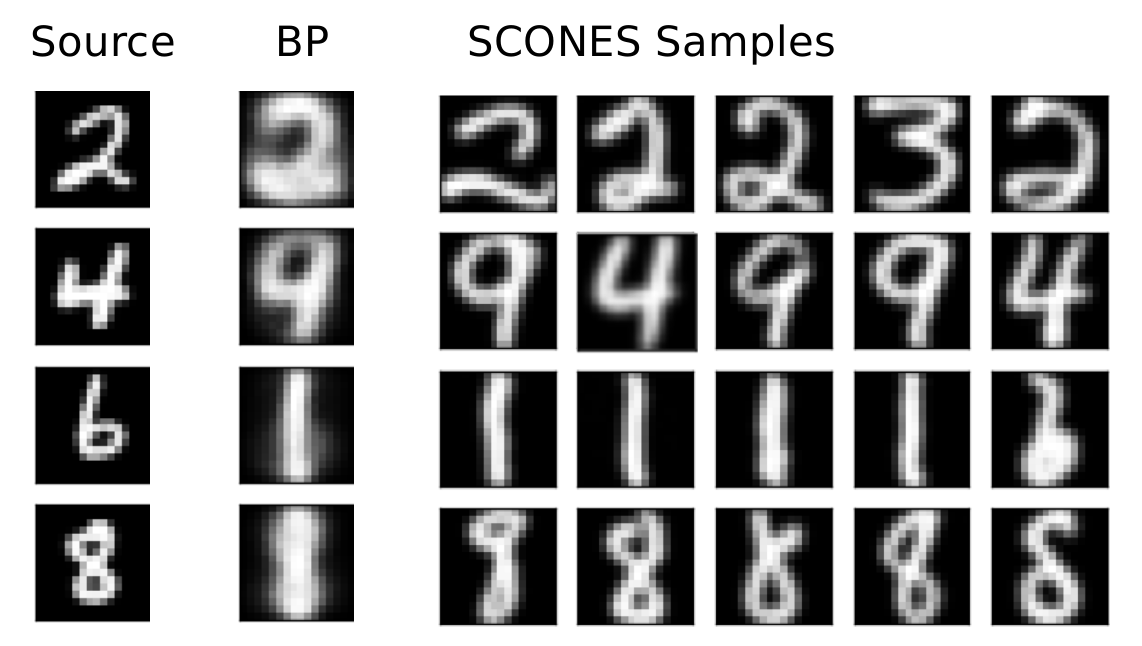}
     \end{subfigure}
     \begin{subfigure}[b]{0.49\textwidth}
         \centering
         \includegraphics[width=\textwidth]{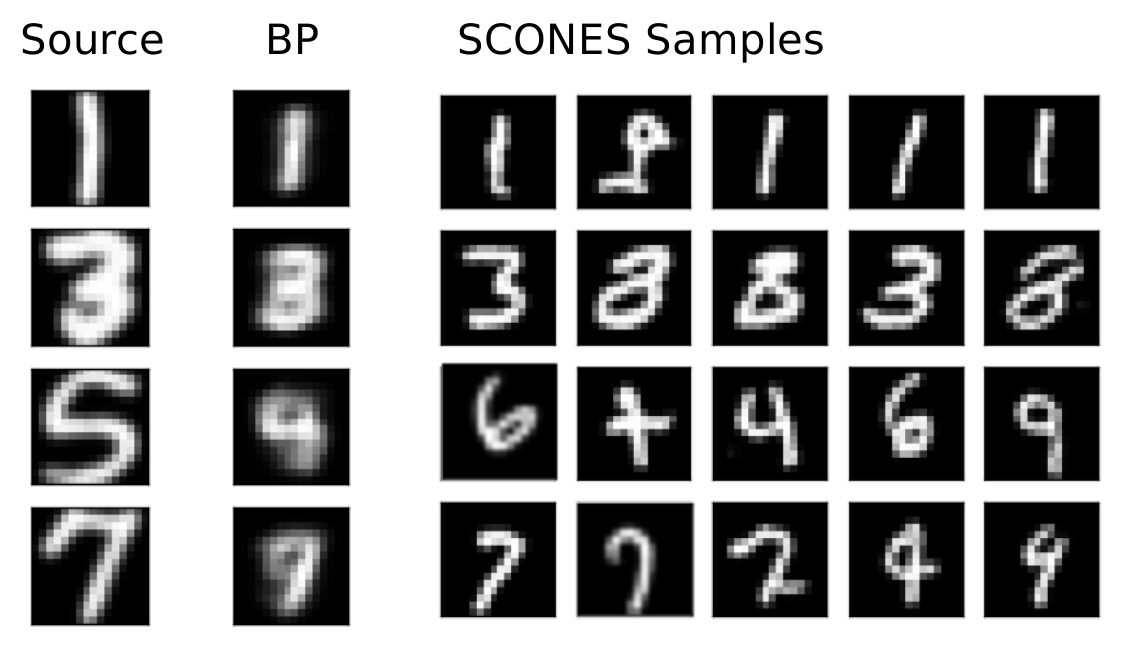}
     \end{subfigure}
    \caption{Comparison of Barycentric Projection \cite{Seguy_Damodaran_Flamary_Courty_Rolet_Blondel_2018} to SCONES for optimal transport between USPS and MNIST datasets of handwritten digits. (Left) Transporting MNIST to USPS. (Right) Transporting USPS to MNIST. Here, we show transportation of the $\chi^2$ regularized problem at $\lambda = 0.001$.}
    \label{fig:usps-mnist}
    \vspace{-0.3cm}
\end{figure}

We show in Figure \ref{fig:usps-mnist} a qualitative plot of SCONES samples on transportation between MNIST and USPS digits. We also show in Section \ref{sec:intro}, Figure \ref{fig:celeba32px-celeba} a qualitative plot of transportation of CelebA images. Because barycentric projection averages $\pi_{Y \mid X=x}$, output images are blurred and show visible mixing of multiple digits. By directly sampling the optimal transport plan, SCONES can separate these modes and generate more realistic images.

At low regularization levels, Algorithm \ref{algo:part1} becomes more expensive and can become numerically unstable. As shown in Figures \ref{fig:usps-mnist} and \ref{fig:celeba32px-celeba}, SCONES can be used to sample the Sinkhorn coupling in intermediate regularization regimes, where optimal transport has a nontrivial effect despite $\pi_{Y | X =x}$ not concentrating on a single image.

\begin{wrapfigure}[37]{r}{0.5\textwidth}
    %\vspace{-1.5cm}
    \begin{center}
        \includegraphics[width=0.445\textwidth]{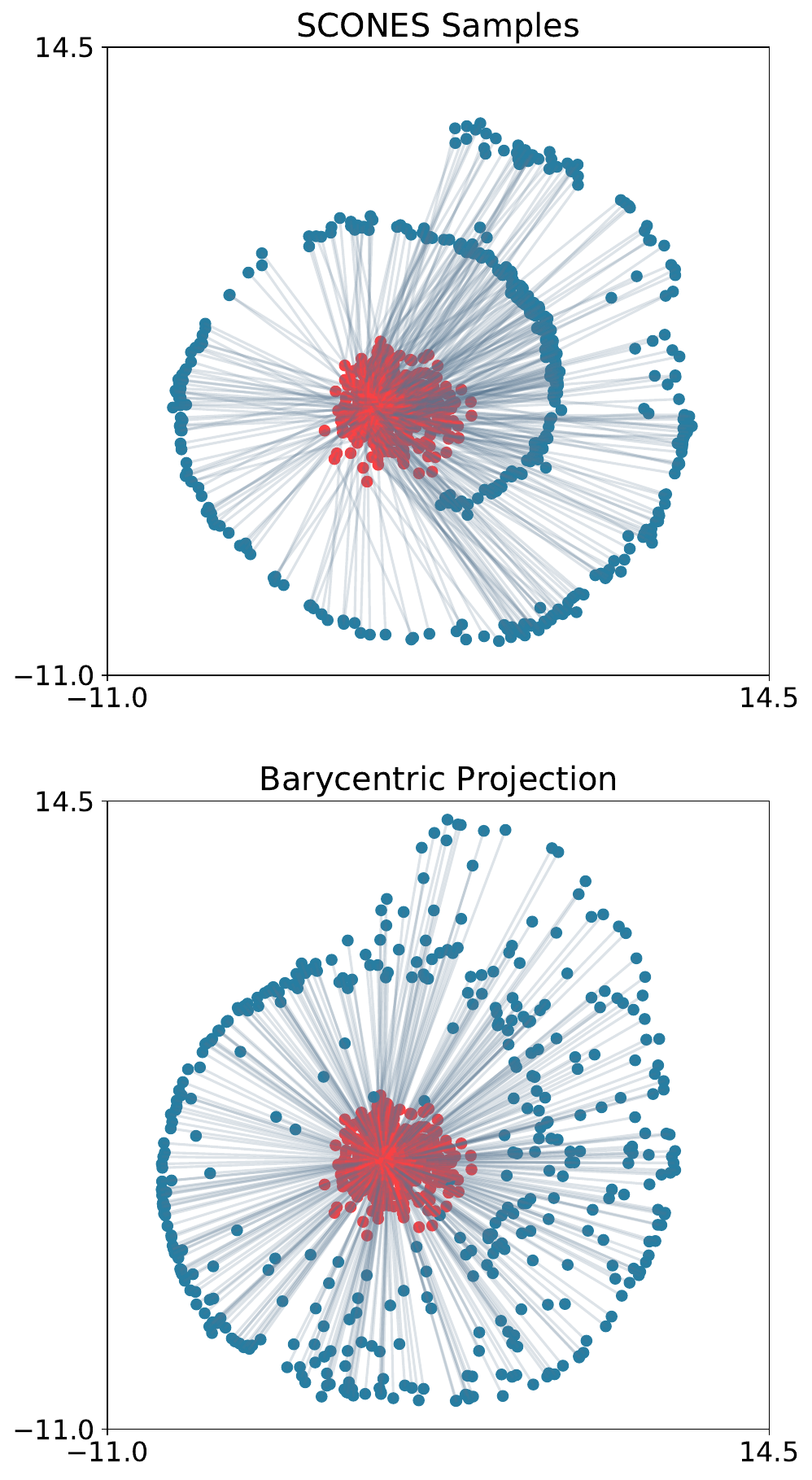}
    \end{center}
        \caption{Entropy regularized, $\lambda=2$, $L^2$ cost SCONES and BP samples on transportation from a unit Gaussian source distribution to the Swiss Roll target distribution. For many samples, the Barycentric average lies off the manifold of high target density, whereas SCONES can separate multiple modes of the conditional coupling and correctly recover the target distribution.}
\end{wrapfigure}

To quantitatively assess the quality of images generated by SCONES, we compute the FID scores of generated CelebA images on two optimal transport problems: transporting 2x downsampled CelebA images to CelebA (the `Super-res.' task) and transporting CelebA to CelebA (the 'Identity' task) for a variety of regularization parameters. The FID score is a popular measurement of sample quality for image generative models and it is a proxy for agreement between the distribution of SCONES samples of the marginal $\pi_Y(y)$ and the true distribution $\tau(y)$. In both cases, we partition CelebA into two datasets of equal size and optimize Algorithm \ref{algo:part1} using separated partitions as source and target data, resizing the source data in the superresolution task. As shown in Table \ref{tbl:FID}, SCONES has a significantly lower FID score than samples generated by barycentric projection. However, under ideal tuning, the unconditional score network generates CelebA samples with FID score 10.23 \cite{Song_Ermon_2020}, so there is some cost in sample quality incurred when using SCONES.

\begin{table}[t]
\centering
\setstretch{1.3}
\begin{tabular}{|l|ccc|ccc|}
\hline 
 & \makecell[l{p{1.5cm}}]{\mbox{KL regularization}, \\ $\lambda = 0.1$}  & \makecell[l{p{1.5cm}}]{~\\$\lambda = 0.01$} & \makecell[l{p{1.5cm}}]{~\\$\lambda = 0.005$} &  \makecell[l{p{1.5cm}}]{\mbox{$\chi^2$ regularization}, \\ $\lambda = 0.1$}  & \makecell[l{p{1.5cm}}]{~\\$\lambda = 0.01$} & \makecell[l{p{1.5cm}}]{~\\$\lambda = 0.001$} \\
 
 \hline 
 \setstretch{1}
 \rule{0pt}{1.5\normalbaselineskip}
\makecell[l]{ SCONES, \\ Super-res.} & 35.59 & 35.77 & 43.80 & 25.84 & 25.64 & 25.59 \\

\setstretch{1}
 \rule{0pt}{1.5\normalbaselineskip}
\makecell[l]{Bary. Proj., \\ Super-res.} & 193.92 & 230.85 & 228.78 & 190.10 & 216.54 & 212.72 \\[0.6em]
\hline 

\setstretch{1}
 \rule{0pt}{1.5\normalbaselineskip}
\makecell[l]{SCONES, \\ Identity} & 36.62 & 34.84 & 43.99 & 25.51  & 25.65 & 27.88 \\

\setstretch{1}
 \rule{0pt}{1.5\normalbaselineskip}
\makecell[l]{Bary. Proj., \\ Identity} & 195.64 & 217.24 & 217.67 & 188.29 & 219.96 & 214.90 \\[0.6em]

\hline

\end{tabular}
\setstretch{1}
 \rule{0pt}{0.3\normalbaselineskip}
 \vspace{0.1cm}
 \caption{FID metric of samples generated by barycentric projection and SCONES, computed on $n=5000$ samples from each model. For comparison to unregularized OT methods, we also trained a Wasserstein-2 GAN ($W_2$ GAN) \cite{leygonie2019adversarial} and a Wasserstein-2 Generative Network ($W_2$ Gen) \cite{korotin2021wasserstein}. $W_2$ GAN achieves FIDs 55.77 on the super-res. task and 32.617 on the identity task. $W_2$ Gen achieves FIDs 32.80 on the super-res. task and 20.57 on the identity task.}
 \label{tbl:FID}
\end{table}

\subsection{Sampling Synthetic Data}

To compare SCONES to a ground truth Sinkhorn coupling in a continuous setting, we consider entropy regularized optimal transport between Gaussian measures on $\R^d$. Given $\sigma = \Ncl(\mu_1, \Sigma_1)$ and $\tau = \Ncl(\mu_2, \Sigma_2)$, the Sinkhorn coupling of $\sigma$, $\tau$ is itself a Gaussian measure and it can be written in closed form in terms of the regularization $\lambda$ and the means and covariances of $\sigma$, $\tau$ \cite{janati2020}. In dimensions $d \in \{2, 16, 54, 128, 256\}$, we consider $\Sigma_1, \Sigma_2$ whose eigenvectors are uniform random (i.e. drawn from the Haar measure on $SO(d)$) and whose eigenvalues are sampled uniform i.i.d. from $[1, 10]$. In all cases, we set means $\mu_1, \mu_2$ equal to zero and choose regularization $\lambda = 2d$. In the Gaussian setting, $\E[\|x-y\|_2^2]$ is of order $d$, so this choice of  scaling ensures a fair comparison across problem dimensions by fixing the relative magnitudes of the cost and regularization terms.

We evaluate performance on this task using the Bures-Wasserstein Unexplained Variance Percentage \cite{chen-2021}, BW-UV$(\hat{\pi}, \pi^{\lambda})$, where $\pi^\lambda$ is the closed form solution given by \citet{janati2020} and where $\hat{\pi}$ is the joint empirical covariance of $k=10000$ samples $(x, y) \sim \pi$ generated using either SCONES or Barycentric Projection. We train SCONES according to Algorithm \ref{algo:part1} and generate samples according to Algorithm \ref{algo:part2}. In place of a score estimate, we use the ground truth target score $\nabla_y \log \tau(y) = \Sigma_2^{-1}(y - \mu_2)$ and omit annealing. We compare SCONES samples to the true solution in the BW-UVP metric \cite{chen-2021} which is measured on a scale from 0 to 100, lower is better. We report $\text{BW-UVP}(\hat{\pi}, {\pi}^\lambda)$ where $\hat{\pi}$ is a $2d$-by-$2d$ joint empirical covariance of SCONES samples $(x, y) \sim \hat{\pi}$ or of BP samples $(x, T_\theta(x))$, $x \sim \sigma$ and $\pi^\lambda$ is the closed-form covariance.

\begin{table}[t]
    \setstretch{1.5}
    \begin{tabular}{|l|l|l|l|l|l|l|}
    \hline
     & $d=2$  & $d=16$ & $d=64$ & $d=128$ & $d=256$  \\ \hline
    SCONES          & $0.025 \pm 0.0014$ & $0.52 \pm 0.0086$ & $1.2 \pm 0.014$   & $1.4 \pm 0.066$    & $2.0 \pm 0.047$             \\ \hline
    BP         & $7.1 \pm 0.13$ & $35 \pm 0.23$  & $42 \pm 0.14$   & $41 \pm 0.088 $   & $41 \pm 0.098$            \\ \hline
    \end{tabular}
    \setstretch{1}
     \vspace{0.2cm}
     \caption{Comparison of SCONES to BP on KL-regularized optimal transport between random high-dimensional gaussians. In each cell, we report the average BW-UVP between a sample empirical covariance and the analytical solution. We report the average over $n=10$ independent random source, target Gaussians and the standard error of the mean.
     }
     \label{fig:methods-comparison}
\vspace{-0.7cm}
\end{table}

\section{Discussion and Future Work}

We introduce and analyze the SCONES method for learning and sampling large-scale optimal transport plans. Our method takes the form of a conditional sampling problem for which the conditional score decomposes naturally into a prior, unconditional score $\nabla_y \log \tau(y)$ and a ``compatibility term'' $\nabla_y \log M(V(x, y))$. This decomposition illustrates a key benefit of SCONES: one score network may re-used to cheaply transport many source distributions to the same target. In contrast, learned forward-model-based transportation maps require an expensive training procedure for each distinct pair of source and target distribution. This benefit comes in exchange of increased computational cost of iterative sampling. For example, generating 1000 samples requires roughly 3 hours using one NVIDIA 2080 Ti GPU. The cost to sample score-based models may fall with future engineering advances, but iterative sampling intrinsically require multiple forward pass evaluations of the score estimator as opposed to a single evaluation of a learned transportation mapping.

There is much future work to be done. First, we study only simple fully connected ReLU networks as parametrizations of the dual variables. Interestingly, we observe that under $L^2$ transportation cost, parametrization by multi-layer convolutional  networks perform equally or worse than their FCN counterparts when optimizing Algorithm \ref{algo:part1}. One explanation may be the \textit{permutation invariance} of $L^2$ cost: applying a permutation of coordinates to the source and target distribution does not change the optimal objective value and the optimal coupling is simply conjugated by a coordinate permutation. As a consequence, the optimal coupling may depend non-locally on input data coordinates, violating the inductive biases of localized convolutional filters. Understanding which network parametrizations or inductive biases are best for a particular choice of transportation cost, source distribution, and target distribution, is one direction for future investigation. 

Second, it remains to explore whether there is a potential synergistic effect between Langevin sampling and optimal transport. Heuristically, as $\lambda \to 0$ the conditional plan $\pi_{Y | X = x}$ concentrates around the transport image of $x$, which should improve the mixing time required by Langevin dynamics to explore high density regions of space. In Section \ref{supp:proofs} of the Appendix, we prove a known result, that the entropy regularized $L^2$ cost compatibility term $M(V(x, y)) = e^{V(x, y) / \lambda}$ is a log-concave function of $y$ for fixed $x$. It the target distribution is itself log-concave, the conditional coupling $\pi_{Y | X = x}$ is also log-concanve and hence Langevin sampling enjoys exponentially fast mixing time. However, more work is required to understand the impacts of non-log-concavity of the target and of optimization errors when learning the compatibility and score functions in practice. We look forward to future developments on these and other aspects of large-scale regularized optimal transport.

%\newpage

\begin{ack}
M.D. acknowledges funding from Northeastern University's Undergraduate Research \& Fellowships office and the Goldwater Award.
P.H. was supported in part by NSF awards 2053448, 2022205, and 1848087.
\end{ack}

\bibliography{citations}
\bibliographystyle{plainnat}

\newpage

\setcounter{section}{0}

\renewcommand{\thesection}{\Alph{section}}

\section{Regularizing Optimal Transport with $f$-Divergences} \label{supp:reg-via-fdiv}

\begin{table}[h]
    \setstretch{1.5}
    \centering
    \begin{adjustbox}{center}
    \begin{tabular}{|l|l|l|l|l|l|}
    \hline
         Name & $f(v)$ & $f^*(v)$ & $f^{*\prime}$ & Dom$(f^*(v))$ \\
         \hline \hline
         Kullback-Leibler & $ v\log(v)$ & $\exp(v-1)$ &  $\exp(v-1) $ & $v \in \R$\\
         Reverse KL & $-\log(v)$ & $\log(-\frac{1}{v}) - 1$ &  $-\frac{1}{v}$ & $v < 0$\\
         Pearson $\chi^2$ & $(v-1)^2$ & $\frac{v^2}{4}+v$ & $\frac{v}{2} + 1$ & $v \in \R$ \\
         Squared Hellinger & $(\sqrt{v} - 1)^2$ & $\frac{v}{1-v}$ & $(1-v)^{-2}$ & $v < 1$ \\
         Jensen-Shannon & $-(v+1) \log(\frac{1+v}{2}) + v \log v$ & $\frac{e^x}{2-e^x}$ & $\frac{2x}{e^x - 2} + x - \log(2-e^x)$ & $v < \log(2)$\\
         GAN & $v \log(v) - (v+1) \log(v+1)$ & $-v - \log(e^{-v}-1)$ & $(e^{-y}-1)^{-1}$ & $v < 0$ \\
         \hline
    \end{tabular}
    \end{adjustbox}
    \setstretch{1}
    \vspace{0.2cm}
    \caption{A list of $f$-Divergences, their Fenchel-Legendre conjugates, and the derivative of their conjugates. These functions determine the corresponding dual regularizers $H^*_f(v)$ and compatibility functions $M_f(v)$. We take definitions of each divergence from \cite{Nowozin_Cseke_Tomioka_2016}. Note that there are many equivalent formulations as each $f(v)$ is defined only up to additive $c(t-1)$, $c \in \R$, and the resulting optimization problems are defined only up to shifting and scaling the objective.}
    \label{tab:f-div}
\end{table}

Here are some general properties of $f$-Divergences which are also used in Section \ref{supp:proofs}. We provide examples of $f$-Divergences in Table \ref{tab:f-div}. The specific forms of $H_f^*(v)$ and $M_f(v)$ are determined by $f(v)$, $f^*(v)$, and $f^{*\prime}(v)$, which can in turn be used to formulate Algorithms \ref{algo:part1} and \ref{algo:part2} for each divergence.

\begin{dfn}[$f$-Divergences]
Let $f: \R \to \R$ be convex with $f(1) = 0$ and let $p, q$ be probability measures such that $p$ is absolutely continuous with respect to $q$. The corresponding $f$-Divergence is defined $D_f(p || q) = \E_q[f(\frac{dp(x)}{dq(x)})]$ where $\frac{dp(x)}{dq(x)}$ is the Radon-Nikodym derivative of $p$ w.r.t. $q$.
\end{dfn}

\begin{prop}[Strong Convexity of $D_f$] \label{prop:convex-div}
Let $\Xcl$ be a countable compact metric space. Fix $q \in \Mcl_+(\Xcl)$ and let $\Pcl_q(\Xcl)$ be the set of probability measures on $\Xcl$ that are absolutely continuous with respect to $q$ and which have bounded density over $\Xcl$. Let $f: \R \to \R$ be $\alpha$-strongly convex with corresponding $f$-Divergence $D_f(p || q)$. Then, the function $H_f(p) \coloneqq D_f(p || q)$ defined over $p \in \Pcl_q(\Xcl)$ is $\alpha$-strongly convex in 1-norm: for $p_0, p_1 \in \Pcl_q(\Xcl)$,
\begin{align} \label{eqn:1-sc}
    H_f(p_1) \geq H_f(p_0) + \langle \nabla_{p} H_f (p_0), p_1 - p_0 \rangle + \frac{\alpha}{2} | p_1 - p_0 |_1^2.
\end{align}
\end{prop}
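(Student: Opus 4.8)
\emph{Proof proposal.} The plan is to reduce the statement to a pointwise application of strong convexity of $f$ followed by a single Cauchy--Schwarz estimate. Write $r_i(x) = \tfrac{dp_i}{dq}(x)$ for $i = 0,1$, so that, since $\Xcl$ is countable, $H_f(p_i) = \sum_{x \in \Xcl} q(x)\, f(r_i(x))$. Because $f$ is $\alpha$-strongly convex on $\R$, for every pair of reals $a,b$ and every subgradient $g \in \partial f(a)$ one has $f(b) \ge f(a) + g\,(b-a) + \tfrac{\alpha}{2}(b-a)^2$; when $f$ is differentiable, $g = f'(a)$. I would apply this with $a = r_0(x)$ and $b = r_1(x)$, multiply through by $q(x) \ge 0$, and sum over $x \in \Xcl$. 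Each summand $q(x)\big[f(r_1(x)) - f(r_0(x)) - f'(r_0(x))(r_1(x)-r_0(x))\big]$ is nonnegative (it dominates $\tfrac{\alpha}{2}q(x)(r_1(x)-r_0(x))^2 \ge 0$), so the summation is unambiguous; to rearrange the resulting inequality into the claimed form one must check that the three pieces are individually finite. This is exactly where $p_i \in \Pcl_q(\Xcl)$ (bounded density) is used: $r_0,r_1$ take values in a compact interval $[0,M]$ on which $f$ and $f'$ are bounded by continuity, so $|H_f(p_i)| \le (\sup_{[0,M]}|f|)\,q(\Xcl) < \infty$ and $\sum_x |f'(r_0(x))|\,|p_1(x)-p_0(x)| \le (\sup_{[0,M]}|f'|)\,|p_1-p_0|_1 < \infty$, the last being finite since $|p_1 - p_0|_1 \le p_1(\Xcl) + p_0(\Xcl) = 2$.

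Next I would identify the linear term. Since $q(x)\,f(p_0(x)/q(x))$ has partial derivative $f'(p_0(x)/q(x))$ in the coordinate $p_0(x)$, the function $x \mapsto f'(r_0(x))$ plays the role of $\nabla_p H_f(p_0)$, and $\sum_x f'(r_0(x))\,(p_1(x)-p_0(x)) = \langle \nabla_p H_f(p_0),\, p_1 - p_0\rangle$ under the natural pairing $\langle g, \mu\rangle = \sum_x g(x)\mu(x)$. I would take this as the definition of the gradient (or, without differentiability, a subgradient of $H_f$) in this sequence-space setting rather than develop a general differentiability theory. After rearranging, the only remaining task is to bound the quadratic remainder below:
\begin{align*}
    \sum_{x \in \Xcl} q(x) \left( \frac{p_1(x) - p_0(x)}{q(x)} \right)^2 \;\ge\; \left( \sum_{x \in \Xcl} |p_1(x) - p_0(x)| \right)^2 = |p_1 - p_0|_1^2 .
\end{align*}
This follows from Cauchy--Schwarz applied to the vectors $\big(|p_1(x)-p_0(x)|/\sqrt{q(x)}\big)_{x}$ and $\big(\sqrt{q(x)}\big)_{x}$, using that $q$ has total mass one (in the applications $q = \sigma \times \tau$); the left-hand side is, up to normalization, a $\chi^2$-divergence, so this is the familiar ``$\chi^2$ dominates squared total variation'' inequality. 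Combining the three pieces yields \eqref{eqn:1-sc}.

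I do not expect a serious obstacle: the proof is essentially a pointwise strong-convexity inequality plus Cauchy--Schwarz. The one point needing care is the bookkeeping for countably infinite $\Xcl$ — justifying that the pointwise inequality may be summed and the result rearranged — which hinges entirely on the finiteness guaranteed by the bounded-density condition in the definition of $\Pcl_q(\Xcl)$, and on $q$ having total mass one for the Cauchy--Schwarz step to produce the constant $\alpha$ rather than $\alpha / q(\Xcl)$.
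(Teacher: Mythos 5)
Your proof is correct, but it takes a genuinely different route from the paper's. The paper invokes an interpolation-type strong-convexity inequality for $f$-divergences in total variation (Proposition 2 of \citet{Melbourne_2020}), identifies $|\cdot|_{\mathrm{TV}} = \tfrac{1}{2}|\cdot|_1$ on a countable space, rearranges, and passes to the limit $t \to 0$ to extract the first-order inequality. You instead prove the first-order inequality directly: pointwise $\alpha$-strong convexity of $f$ applied to the densities $r_i = dp_i/dq$, integrated against $q$, produces the function values, the linear term $\sum_x f'(r_0(x))(p_1(x)-p_0(x)) = \langle \nabla_p H_f(p_0),\, p_1-p_0\rangle$, and the $\chi^2$-type remainder $\sum_x q(x)(r_1(x)-r_0(x))^2$, which dominates $|p_1-p_0|_1^2$ by Cauchy--Schwarz. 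This is more elementary and self-contained (no external lemma, no limiting argument), and your bookkeeping for countably infinite $\Xcl$ --- nonnegativity of the remainder summands plus finiteness supplied by the bounded-density condition in $\Pcl_q(\Xcl)$ --- is sound; the subgradient remark also covers non-differentiable $f$. The one hypothesis you lean on that the statement does not make explicit is $q(\Xcl) = 1$: as written the proposition allows any $q \in \Mcl_+(\Xcl)$, and your constant would degrade to $\alpha/q(\Xcl)$ otherwise. You flag this honestly, it holds in every application in the paper ($q = \sigma\times\tau$ with $\sigma,\tau$ probability measures), and the paper's own proof carries the same implicit restriction through the cited result, so this is a shared looseness in the statement rather than a gap in your argument.
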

\begin{proof}
Define the measure $p_t = tp_1 + (1-t)p_0$. Then $H_f$ satisfies the following convexity inequality (\citet{Melbourne_2020}, Proposition 2).
\begin{align*}
    H_f(p_t) \leq t H_f(p_1) + (1-t) H_f(p_0) - \alpha \left( t | p_1 - p_t|_{\text{TV}}^2 + (1-t) | p_0 - p_t |_{\text{TV}}^2 \right)
\end{align*}
By assumption that $\Xcl$ is countable, $|p - q|_{\text{TV}} = \frac{1}{2}|p - q|_1$. It follows that,
\begin{align*}
    H_f(p_1) & \geq H_f(p_0) + \frac{H_f(p_0 + t(p_1 - p_0)) - H_f(p_0)}{t} + \frac{\alpha}{2} \left( | p_1 - p_t |_1^2 + (t^{-1} - 1) | p_0 - p_t |_1^2 \right) \\
    & \geq H_f(p_0) + \frac{H_f(p_0 + t(p_1 - p_0)) - H_f(p_0)}{t} + \frac{\alpha}{2} | p_1 - p_t |_1^2
\end{align*}
and, taking the limit $t \to 0$, the inequality \eqref{eqn:1-sc} follows.
\end{proof}

For the purposes of solving empirical regularized optimal transport, the technical conditions of Proposition \ref{prop:convex-div} hold. Additionally, note that $\alpha$-strong convexity of $f$ is sufficient but not necessary for strong convexity of $H_f$. For example, entropy regularization uses $f_{\text{KL}}(v) = v \log(v)$ which is not strongly convex over its domain, $\R_+$, but which yields a regularizer $H_{\text{KL}}(p) = \KL(p || q)$ that is $1$-strongly convex in $l_1$ norm when $q$ is uniform. This follows from Pinksker's inequality as shown in \cite{Seguy_Damodaran_Flamary_Courty_Rolet_Blondel_2018}. Also, if $f$ is $\alpha$-strongly convex over a subinterval $[a, b]$ of its domain, then Proposition \ref{prop:convex-div} holds under the additional assumption that $a \leq \frac{dp(x)}{dq(x)}(x) \leq b$ uniformly over $x \in \Xcl$.

\section{Proofs} \label{supp:proofs}
For convenience, we repeat the main assumptions and statements of theorems alongside their proofs. First, we prove the following properties about $f$-divergences.

\begin{prop-}[\ref{prop:f-div-dual} -- Regularization with $f$-Divergences]
Consider the empirical setting of Definition \ref{def:reg-ot}. Let $f(v) : \R \to \R$ be a differentiable $\alpha$-strongly convex function with convex conjugate $f^*(v)$. Set $f^{*\prime}(v) = \partial_v f^*(v)$. Define the violation function $V(x, y ; \phi, \psi) = \phi(x) + \psi(y) - c(x, y)$. Then,
\begin{enumerate}
    \item The $D_f$ regularized primal problem $K_\lambda(\pi)$ is $\lambda \alpha$-strongly convex in $l_1$ norm. With respect to dual variables $\phi \in \R^{|\Xcl|}$ and $\psi \in \R^{|\Ycl|}$, the dual problem $J_\lambda(\phi, \psi)$ is concave, unconstrained, and $\frac{1}{\lambda \alpha}$-strongly smooth in $l_\infty$ norm. Strong duality holds: $K_\lambda(\pi) \geq J_\lambda(\phi, \psi)$ for all $\pi$, $\phi$, $\psi$, with equality for some triple $\pi^*, \phi^*, \psi^*$. 
    \item $J_\lambda(\phi, \psi)$ takes the form
    \begin{align*}
        J_\lambda(\phi, \psi) = \E_\sigma[\phi(x)] + \E_\tau[\psi(y)] - \E_{\sigma \times \tau} [ H^*_f(V(x, y; \phi, \psi))]
    \end{align*}
    where $H^*_f(v) = \lambda f^* (\lambda^{-1} v)$. 
    \item The optimal solutions $(\pi^*, \phi^*, \psi^*)$ satisfy 
    \begin{align*}
        \pi^*(x, y) = M_f(V(x, y; \phi, \psi)) \sigma(x) \tau(y)
    \end{align*}
    where $M_f(x, y) = f^{*\prime}(\lambda^{-1}v)$. 
\end{enumerate}
\end{prop-}
\begin{proof}

By assumption that $f$ is differentiable, $K_\lambda(\pi)$ is continuous and differentiable with respect to $\pi \in \Mcl_+(\Xcl \times \Ycl)$. By Proposition \ref{prop:convex-div}, it is $\lambda \alpha$-strongly convex in $l_1$ norm. By the Fenchel-Moreau theorem, $K_\lambda(\pi)$ therefore has a unique minimizer $\pi^*$ satisfying strong duality, and by \cite[Theorem 6]{Kakade_Shalev-Shwartz_Tewari}, the dual problem is $\frac{1}{\lambda \alpha}$-strongly smooth in $l_\infty$ norm.

The primal and dual are related by the Lagrangian $\Lcl(\pi, \phi, \psi)$,
\begin{align} \label{eq:lagrangian}
     \Lcl(\phi, \psi, \pi) = & \ \E_\pi [c(x, y)] + \lambda  H_f(\pi) + \E_\sigma[\phi(x)] - \E_\pi[\phi(x)] + \E_\tau[\phi(y)] - \E_\pi[\psi(y)]
\end{align}
which has $K_\lambda(\pi) = \max_{\phi, \psi} \Lcl(\phi, \psi, \pi)$ and $J_\lambda(\phi, \psi) = \min_{\pi} \Lcl(\phi, \psi, \pi)$. In the empirical setting, $\pi$, $\sigma$, $\tau$ may be written as finite dimensional vectors with coordinates $\pi_{x, y}$, $\sigma_x$, $\tau_y$ for $x, y \in \Xcl \times \Ycl$. Minimizing the $\pi$ terms of $J_\lambda$,
\begin{align*}
     \min_{\pi \in \Mcl(\Xcl \times \Ycl)} & \left\{ \E_{\pi}[c(x, y) - \phi(x) - \psi(y) ] + \lambda \E_{\sigma \times \tau} \left[ f\left( \frac{ d \pi(x, y)}{d\sigma(x) d \tau(y) }\right) \right] \right\}  \\
    & = \sum_{x, y \in \Xcl \times \Ycl} - \max_{\pi_{x, y} \geq 0} \left\{ \pi_{x, y} \cdot (\phi(x) + \psi(y) - c(x, y)) - \lambda \sigma_x \tau_y f \left( \frac{\pi_{x, y}}{\sigma_x \tau_y}\right) \right\} \\
    & = \sum_{x, y \in \Xcl \times \Ycl} -h^*_{x, y}( \phi(x) + \psi(y) - c(x, y)) 
\end{align*}
where $h^*_{x, y}$ is the convex conjugate of $(\lambda \sigma_x \tau_y) \cdot f( p/ (\sigma_x \tau_y) )$ w.r.t. the argument $p$. For general convex $f(p)$, it is true that $[\lambda f(p)]^*(v) = \lambda f^*(\lambda^{-1} v)$ \cite[Chapter 3]{Boyd_Boyd_Vandenberghe_Press_2004}.  Applying twice, 
\begin{align*}
      [(\lambda \sigma_x \tau_y) \cdot f(p/(\sigma_x \tau_y))]^*(v) = \lambda [(\sigma_x \tau_y) f(p/(\sigma_x \tau_y))]^*(\lambda^{-1} v) = (\lambda \sigma_x \tau_y) \cdot f^*(v/\lambda) 
\end{align*}
so that 
\begin{align*}
       \min_{\pi \in \Mcl_+(\Xcl \times \Ycl)} & \E_{\pi}[c(x, y) - \phi(x) - \psi(y) ] + \lambda \E_{\sigma \times \tau} \left[ f\left( \frac{ d \pi(x, y)}{d\sigma(x) d \tau(y) }\right) \right] \\
     = & \sum_{x, y \in \Xcl \times \Ycl} \sigma_x \tau_y \lambda f^*(\lambda^{-1} v) \\
     = & - \E_{\sigma \times \tau} [ H^*_f(V(x, y; \phi, \psi))]
\end{align*}
for $H^*_f(v) = \lambda f^*(\lambda^{-1} v)$. The claimed form of $J_\lambda(\phi, \psi) $ follows.

Additionally, for general convex $f(p)$, it is true that $\partial_v f^*(v) = \argmax_{p} \left\{ \langle v, p \rangle - f(p)\right\}$, \cite[Chapter 3]{Boyd_Boyd_Vandenberghe_Press_2004}. For $\phi^*$, $\psi^*$ maximizing $J_\lambda(\phi, \psi)$, it follows by strong duality that
\begin{align*}
    \pi^*_{x, y} & = \argmin_{\pi \in \Mcl_+(\Xcl \times \Ycl)} \Lcl(\phi^*, \psi^*, \pi) \\
    & = \nabla_V \E_{\sigma \times \tau} [ H^*_f(V(x, y; \phi^*, \psi^*))] = M_f(V(x, y; \phi^*, \psi^*)) \sigma_x \tau_y. 
\end{align*}
as claimed. 
\end{proof}

We proceed to proofs of the theorems stated in Section \ref{sec:theory}. 
\begin{assn-}[\ref{assn:nets} -- Approximate Linearity]
Let $f_\theta(x)$ be a neural network with parameters $\theta \in \Theta$, where $\Theta$ is a set of feasible weights, for example those reachable by gradient descent. Fix a dataset $\{X_i\}_{i=1}^N$ and let $\Kcl_\theta \in \R^{N \times N}$ be the Gram matrix of coordinates $[\Kcl_\theta]_{ij} = \langle \nabla_\theta f_\theta(X_i), \nabla_\theta f_\theta(X_j) \rangle $. Then $f_\theta(x)$ must satisfy,
\begin{enumerate}
    \item There exists $R \gg 0$ so that $\Theta \subseteq B(0, R)$, where $B(0, R)$ is the Euclidean ball of radius $R$. 
    \item There exist $\rho_M > \rho_m > 0$ such that for $\theta \in \Theta$,
    \begin{align*}
        \rho_M \geq \lambda_{\text{max}}(\Kcl_\theta) \geq \lambda_{\text{min}}(\Kcl_\theta) \geq \rho_m  > 0.
    \end{align*}
    \item For $\theta \in \Theta$ and for all data points $\{X_i\}_{i=1}^N$, the Hessian matrix $D^2_\theta f_\theta(x_i)$ is bounded in spectral norm:
    \begin{align*}
        \| D^2_\theta f_\theta(x_i) \| \leq \frac{\rho_M}{C_h}
    \end{align*}
    where $C_h \gg 0$ depends only on $R$, $N$, and the regularization $\lambda$. 
\end{enumerate}
\end{assn-}
The constant $C_h$ may depend on the dataset size $N$, the upper bound of $\rho_M$ for eigenvalues of the NTK, the regularization parameter $\lambda$, and it may also depend indirectly on the bound $R$.

\begin{thm-}[\ref{thm:opt-neural-nets} -- Optimizing Neural Nets]
Suppose $J_\lambda(\phi, \psi)$ is $\frac{1}{s}$-strongly smooth in $l_\infty$ norm. Let $\phi_{\theta}$, $\psi_{\theta}$ be neural networks satisfying Assumption \ref{assn:nets} for the dataset $\{(x_i, y_i)\}_{i=1}^N$, $N = |\Xcl| \cdot |\Ycl|$.

Then gradient descent of $J_\lambda(\phi_{\theta}, \psi_{\theta})$ with respect to $\theta$ at learning rate $\eta = \frac{\lambda}{2 \rho_M}$ converges to an $\epsilon$-approximate global maximizer of $J_\lambda$ in at most $\left(\frac{2\kappa R^2}{s}\right) \epsilon^{-1}$ iterations, where $\kappa = \frac{\rho_M}{\rho_m}$.
\end{thm-}

\begin{proof}
For indices $i$, let $S_{\theta_i} = (\phi_{\theta_i}, \psi_{\theta_i})$ so that Assumption \ref{assn:nets} applies with $S_{\theta}$ in place of $f_\theta$. 

\begin{lem}[Smoothness] \label{lem:smooth}
$J_\lambda(S_\theta)$ is $\frac{2\rho_M}{s}$-strongly smooth in $l_2$ norm with respect to $\theta$:
\begin{align*}
    J_\lambda(S_{\theta_2}) \leq J_\lambda(S_{\theta_1}) + \langle \nabla_\theta J_\lambda(S_{\theta_1}), S_{\theta_2} - S_{\theta_1} \rangle + \frac{\rho_M}{\lambda} \|\theta_2 - \theta_1\|_2^2. 
\end{align*}
\end{lem}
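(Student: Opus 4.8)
\textit{Proof proposal.} The plan is to treat $g(\theta) \coloneqq J_\lambda(S_\theta) = J_\lambda(\phi_\theta,\psi_\theta)$ as a composition $g = J_\lambda \circ \Phi$, where $\Phi\colon \theta \mapsto S_\theta$ records the values of the parametrized dual variables on the (finite) dataset, living in a finite–dimensional space on which $J_\lambda$ is defined via Proposition \ref{prop:f-div-dual}. I will bound the Lipschitz constant of $\theta \mapsto \nabla_\theta g(\theta)$ — equivalently $\|\nabla^2_\theta g\|_{\mathrm{op}}$ where $g$ is $C^2$ — by $\tfrac{2\rho_M}{s}$ on $\Theta$, and then the stated quadratic upper bound follows from the integral form of Taylor's remainder along the segment from $\theta_1$ to $\theta_2$ (the inner product pairing the $\theta$-gradient of $g$ at $\theta_1$ with $\theta_2-\theta_1$). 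By the chain rule, $\nabla_\theta g(\theta) = D\Phi(\theta)^{\!\top}\nabla J_\lambda(\Phi(\theta))$ and
\[
  \nabla^2_\theta g(\theta) \;=\; D\Phi(\theta)^{\!\top}\,\nabla^2 J_\lambda(\Phi(\theta))\,D\Phi(\theta) \;+\; \sum_{k} \big[\nabla J_\lambda(\Phi(\theta))\big]_k\, \nabla^2_\theta \Phi_k(\theta),
\]
so it suffices to control the two terms separately.

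For the first (Gauss--Newton) term: the Gram matrix of $D\Phi(\theta)$ is exactly the NTK $\mathcal{K}_\theta$ appearing in Assumption \ref{assn:nets}, so $\|D\Phi(\theta)\|_{\mathrm{op}}^2 = \lambda_{\max}(\mathcal{K}_\theta)\le\rho_M$. Moreover $\tfrac1s$-strong smoothness of $J_\lambda$ in $\ell_\infty$ means $\|\nabla J_\lambda(u)-\nabla J_\lambda(v)\|_1\le\tfrac1s\|u-v\|_\infty$, hence $\|\nabla J_\lambda(u)-\nabla J_\lambda(v)\|_2\le\|\cdot\|_1\le\tfrac1s\|u-v\|_\infty\le\tfrac1s\|u-v\|_2$, i.e.\ $\|\nabla^2 J_\lambda\|_{\mathrm{op}}\le\tfrac1s$. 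Therefore the Gauss--Newton term has spectral norm at most $\rho_M/s$.

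For the second (curvature) term: by Assumption \ref{assn:nets}(3) each $\|\nabla^2_\theta\Phi_k(\theta)\|_{\mathrm{op}}\le\rho_M/C_h$, so by the triangle inequality its spectral norm is at most $\tfrac{\rho_M}{C_h}\sum_k|[\nabla J_\lambda(\Phi(\theta))]_k| = \tfrac{\rho_M}{C_h}\|\nabla J_\lambda(\Phi(\theta))\|_1$. The crucial estimate is a uniform bound on $\|\nabla J_\lambda(S_\theta)\|_1$ over $\theta\in\Theta$: from the form in Proposition \ref{prop:f-div-dual}, the gradient entries are $\sigma(x)\big(1-\sum_y\tau(y)M_f(V(x,y))\big)$ and $\tau(y)\big(1-\sum_x\sigma(x)M_f(V(x,y))\big)$, so $\|\nabla J_\lambda(S_\theta)\|_1\le 2\big(1+\sup_{x,y}|M_f(V(x,y;\phi_\theta,\psi_\theta))|\big)$. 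Since $\Theta\subseteq B(0,R)$ (Assumption \ref{assn:nets}(1)) the network outputs $\phi_\theta(x),\psi_\theta(y)$ — hence the violations $V$ — range over a compact set determined by $R$ and the fixed finite data, on which the continuous map $M_f$ is bounded by a constant $B = B(R,N,\lambda)$; note the $\lambda$-dependence enters through $M_f(v)=f^{*\prime}(\lambda^{-1}v)$. Choosing $C_h \ge s\,B$ — precisely the asserted dependence of $C_h$ on $R$, $N$, $\lambda$ — makes the curvature term $\le\rho_M/s$ as well, so $\|\nabla^2_\theta g(\theta)\|_{\mathrm{op}}\le\tfrac{2\rho_M}{s}$ on $\Theta$, and integrating twice gives $g(\theta_2)\le g(\theta_1)+\langle\nabla_\theta g(\theta_1),\theta_2-\theta_1\rangle+\tfrac{\rho_M}{s}\|\theta_2-\theta_1\|_2^2$, i.e.\ $\tfrac{2\rho_M}{s}$-strong smoothness in $\ell_2$; specializing $s=\lambda$ recovers the displayed inequality.

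The main obstacle is entirely the curvature term: one must carry the three-tensor $\nabla^2_\theta\Phi$ through the composition with the correct operator-norm and dimension constants (any $\sqrt{N}$-type factors arising if one argues via Lipschitzness of $D\Phi$ rather than a genuine Hessian simply get absorbed into $C_h$), and one must justify that $\|\nabla J_\lambda\|_1$ is bounded on the feasible region, which hinges on $\Theta$ being bounded together with the empirical (finite) nature of the problem so that $M_f\circ V$ ranges over a compact set. A minor technical point worth noting in the write-up: $J_\lambda$ need only have a Lipschitz gradient (which it does, being $\tfrac1s$-strongly smooth) rather than a true Hessian, so the cleanest exposition phrases everything through the Lipschitz constant of $\theta\mapsto\nabla_\theta g(\theta)$ and the fundamental theorem of calculus.

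\hfill$\square$
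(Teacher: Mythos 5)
Your proposal is correct and follows essentially the same route as the paper's proof: both split the variation of $J_\lambda(S_\theta)$ into a linearized term controlled by the NTK eigenvalue bound $\rho_M$ together with $\tfrac{1}{s}$-strong smoothness of $J_\lambda$ (passed from $\ell_\infty$ to $\ell_2$ by norm comparison), and a curvature term controlled by the network Hessian bound times a gradient norm that is absorbed into the choice of $C_h$. The only difference is cosmetic — you phrase the argument via the operator norm of $\nabla^2_\theta (J_\lambda\circ\Phi)$, whereas the paper uses the two-point strong-smoothness inequality plus a Lagrange-remainder expansion of $\langle v, S_\theta\rangle$ — and both yield the same constant $\rho_M/s$, matching the displayed $\rho_M/\lambda$ when $s=\lambda$.
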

\begin{proof}
It is assumed that $J_\lambda(S)$ is $(\frac{1}{s}, l_\infty)$-strongly smooth and that $K_\lambda(\pi)$ is $(s, l_1)$-strongly convex. Note that $(\frac{1}{s}, l_2)$-strong smoothness is \textit{weakest} in the sense that it is implied via norm equivalence by $(\frac{1}{s}, l_q)$-strong smoothness for $2 \leq q \leq \infty$. 
\begin{align*}
    J_\lambda(S_2) & \leq J_\lambda(S_1) + \langle \nabla_{S} J_\lambda(S_1), S_2 - S_1 \rangle + \frac{1}{2s} \mid S_2 - S_1\mid_q^2 \\
     \implies J_\lambda(S_2) & \leq J_\lambda(S_1) + \langle \nabla_{S} J_\lambda(S_1), S_2 - S_1 \rangle + \frac{1}{2s} \| S_2 - S_1\|_2^2
\end{align*}
A symmetric property holds for $(s, l_2)$-strong convexity of $K_\lambda(\pi)$ which is implied by $(s, l_p)$-strong convexity, $1 \leq p \leq 2$. By Assumption \ref{assn:nets},
\begin{align} \label{ineq}
    J_\lambda(S_{\theta_2}) - J_\lambda (S_{\theta_1}) - \langle \nabla_{S} J_\lambda(S_{\theta_1}), S_{\theta_2} - S_{\theta_1}\rangle \leq \frac{1}{2s} \|S_{\theta_2} - S_{\theta_1} \|_2^2  \leq \frac{\rho_M}{2s} \| \theta_2 - \theta_1 \|_2^2. 
\end{align}
To establish smoothness, it remains to bound $\langle \nabla_{S} J_\lambda(S_{\theta_1}), S_{\theta_2} - S_{\theta_1}\rangle$. Set $v = \nabla_{S} J_\lambda(S_{\theta_1}) \in \R^N$ and consider the first-order Taylor expansion in $\theta$ of $\langle v, S_{\theta} \rangle$ evaluated at $\theta = \theta_2$. Applying Lagrange's form of the remainder, there exists $0 < c < 1$ such that
\begin{align*}
    \langle v, S_{\theta_2} \rangle = & \ \langle v, S_{\theta_1} \rangle + \langle v, J^S_\theta (S_{\theta_2} - S_{\theta_1})\rangle \\
    & +\frac{1}{2} \sum_{i=1}^n v_i (\theta_2 - \theta_1)^T [D^2_\theta(S_{\theta_1}(x_i) + c( S_{\theta_2}(x_i) - S_{\theta_1}(x_i) )) ](\theta_2 - \theta_1)
\end{align*}
and so by Cauchy-Schwartz, 
\begin{align*}
    \langle v, S_{\theta_2} - S_{\theta_1} \rangle \leq \langle v, J^S_\theta (S_{\theta_2} - S_{\theta_1})\rangle +  \frac{\|D^2_\theta\|}{2} \sqrt{N} \|v\|_2  \|\theta_2 - \theta_1 \|_2^2 \leq \frac{\rho_M}{2s}  \|\theta_2 - \theta_1 \|_2^2.
\end{align*}
The final inequality follows by taking $C_h \geq \lambda \sqrt{N} \sup_v \|v\|_2 $. This supremum is bounded by assumption that $\Theta \subseteq B(0, R)$. Plugging in $v = \nabla_{S} J_\lambda(S_{\theta_1})$, we have
\begin{align*}
    \langle \nabla_{S} J_\lambda(S_{\theta_1}), S_{\theta_2} - S_{\theta_1}\rangle & \leq \langle \nabla_{S} J_\lambda(S_{\theta_1}), J^S_\theta (S_{\theta_2} - S_{\theta_1}) \rangle + \frac{\rho_M}{2s} \|\theta_2 - \theta_1 \|_2^2 \\
    & = \langle \nabla_\theta J_\lambda(S_{\theta_1}), \theta_2 - \theta_1\rangle + \frac{\rho_M}{2s} \|\theta_2 - \theta_1 \|_2^2.
\end{align*}
Returning to \eqref{ineq}, we have
\begin{align*}
    J_\lambda(S_{\theta_2}) - J_\lambda (S_{\theta_1}) \leq \langle \nabla_\theta J_\lambda(S_{\theta_1}), \theta_2 - \theta_1\rangle + \frac{\rho_M}{s} \|\theta_2 - \theta_1 \|_2^2.
\end{align*}
from which Lemma \ref{lem:smooth} follows.
\end{proof}

\begin{lem}[Gradient Descent] \label{lem:gd}
Gradient descent over the parameters $\theta$ with learning rate $\eta = \frac{s}{2\rho_M}$ converges in $T$ iterations to parameters $\theta_t$ satisfying $J_\lambda(S_{\theta_t}) - J_\lambda(S^*) \leq  \left(\frac{2 \kappa R^2}{s} \right)  \frac{1}{T} $ where $\kappa = \frac{\rho_M}{\rho_m}$ is the condition number.
\end{lem}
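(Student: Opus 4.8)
The plan is to mimic the classical $O(1/T)$ convergence analysis of gradient descent on a smooth convex objective, with two modifications forced by the fact that $J_\lambda(S_\theta)$ is smooth but \emph{not} concave as a function of $\theta$: we take the descent step from Lemma \ref{lem:smooth}, and we replace the concavity inequality by an \emph{approximate secant inequality} that follows from concavity of $J_\lambda$ in the network output $S$ together with the near-linearity of $S_\theta$ in $\theta$ guaranteed by Assumption \ref{assn:nets}. Throughout, since $J_\lambda$ is concave and we are maximizing, ``gradient descent'' means ascent $\theta_{t+1} = \theta_t + \eta\, g_t$ with $g_t := \nabla_\theta J_\lambda(S_{\theta_t})$.

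First I would record the descent inequality. Lemma \ref{lem:smooth} says $J_\lambda(S_\theta)$ is $\tfrac{2\rho_M}{s}$-smooth in $\theta$; used in its lower-bound form (valid for any function with Lipschitz gradient) with $\eta = \tfrac{s}{2\rho_M}$, one gets $J_\lambda(S_{\theta_{t+1}}) \ge J_\lambda(S_{\theta_t}) + \tfrac{\eta}{2}\|g_t\|_2^2$. Writing $\delta_t := J_\lambda(S^*) - J_\lambda(S_{\theta_t}) \ge 0$ (the quantity the theorem really controls), this reads $\delta_{t+1} \le \delta_t$ and $\|g_t\|_2^2 \le \tfrac{2}{\eta}(\delta_t - \delta_{t+1})$.

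Next I would prove the approximate secant inequality: there is a fixed reference $\theta^* \in B(0,R)$ with $S_{\theta^*} = S^*$ — such a $\theta^*$ exists because the non-degenerate NTK ($\lambda_{\min}(\Kcl_\theta) \ge \rho_m > 0$) makes the network expressive enough to interpolate the dual optimum on the finite dataset — such that $\langle g_t, \theta^* - \theta_t\rangle \ge \delta_t$. To see this, note $\langle g_t, \theta^* - \theta_t\rangle = \langle \nabla_S J_\lambda(S_{\theta_t}),\, J^S_{\theta_t}(\theta^* - \theta_t)\rangle$, where $J^S_{\theta_t}$ is the Jacobian of $S$ at $\theta_t$; a second-order Taylor expansion gives $J^S_{\theta_t}(\theta^* - \theta_t) = S^* - S_{\theta_t} + e_t$ with $\|e_t\|_2 \le \tfrac12 \|D^2_\theta S\|\, \|\theta^* - \theta_t\|_2^2 \le \tfrac{2R^2 \rho_M \sqrt{N}}{C_h}$ by Assumption \ref{assn:nets}.3 and $\Theta \subseteq B(0,R)$; concavity of $J_\lambda$ in $S$ gives $\langle \nabla_S J_\lambda(S_{\theta_t}),\, S^* - S_{\theta_t}\rangle \ge \delta_t$; and the residual $\langle \nabla_S J_\lambda(S_{\theta_t}), e_t\rangle$ is made negligible by taking $C_h$ large, which is admissible since $\|\nabla_S J_\lambda\|$ is uniformly bounded on the bounded set of reachable outputs — precisely why $C_h$ is allowed to depend on $R$, $N$, and $\lambda$. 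This is the step I expect to be the main obstacle: one must check that the second-order ``model mismatch'' between the net and its tangent-kernel linearization can be absorbed without degrading the $1/T$ rate, which is the sole purpose of the Hessian bound in Assumption \ref{assn:nets}.

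Finally, with these two ingredients the textbook potential argument runs in $\theta$-space. Expanding $\|\theta_{t+1} - \theta^*\|_2^2 = \|\theta_t - \theta^*\|_2^2 - 2\eta\langle g_t, \theta^* - \theta_t\rangle + \eta^2 \|g_t\|_2^2$, inserting the secant inequality and $\eta^2\|g_t\|_2^2 \le 2\eta(\delta_t - \delta_{t+1})$ (from the descent step with $\eta = \tfrac{s}{2\rho_M}$), and cancelling yields $\|\theta_{t+1} - \theta^*\|_2^2 \le \|\theta_t - \theta^*\|_2^2 - 2\eta\, \delta_{t+1}$. Telescoping and using monotonicity of $\delta_t$ gives $\delta_T \le \tfrac1T \sum_{t=1}^T \delta_t \le \tfrac{\|\theta_0 - \theta^*\|_2^2}{2\eta T}$; bounding $\|\theta_0 - \theta^*\|_2^2$ through $\Theta \subseteq B(0,R)$ — where the conditioning factor $\kappa = \rho_M/\rho_m$ enters when translating between $\theta$-space distances and output-space distances via $\Kcl_\theta$ — produces $\delta_T \le \tfrac{2\kappa R^2}{s}\cdot\tfrac1T$, which is the claim; setting this $\le \epsilon$ and solving for $T$ then yields the iteration bound of Theorem \ref{thm:opt-neural-nets}.
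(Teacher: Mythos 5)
Your route is genuinely different from the paper's, and the difference is where the trouble lies. The paper never introduces a parameter $\theta^*$ realizing the optimum and never needs a secant inequality in $\theta$-space. Instead it runs the \emph{function-value recursion} form of the $O(1/T)$ argument entirely through the output space: concavity of $J_\lambda$ in $S$ plus Cauchy--Schwarz give $\|\nabla_S J_\lambda(S_{\theta_t})\|_2 \geq \Delta_t / \|S_{\theta_t}-S^*\|_2$, the NTK lower bound transfers this to parameter space via $\|\nabla_\theta J_\lambda\|_2^2 = \nabla_S J_\lambda^\top \Kcl_\theta \nabla_S J_\lambda \geq \rho_m \|\nabla_S J_\lambda\|_2^2$, and combining with the sufficient-increase step from Lemma \ref{lem:smooth} yields $\Delta_{t+1} \leq \Delta_t - \frac{s\rho_m}{2\rho_M \|S_{\theta_t}-S^*\|_2^2}\Delta_t^2$, whence the $\frac{2\kappa R^2}{sT}$ bound. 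Note this is exactly where $\kappa=\rho_M/\rho_m$ arises; your potential argument in $\theta$-space would instead terminate in $\|\theta_0-\theta^*\|_2^2 \rho_M/(sT)$, with no natural appearance of $\rho_m$, so even if your argument closed it would not produce the stated constant.

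Two steps of your proposal are genuine gaps. First, you posit $\theta^* \in B(0,R)$ with $S_{\theta^*}=S^*$ and justify it by saying a non-degenerate NTK ``makes the network expressive enough to interpolate.'' A uniformly positive-definite $\Kcl_\theta$ gives a full-rank Jacobian of $\theta \mapsto S_\theta$, hence local surjectivity, but it does not imply that the global dual optimum $S^*$ lies in the image of the bounded set $B(0,R)$; this is an additional representation assumption not contained in Assumption \ref{assn:nets}, and the paper's proof is structured precisely so as not to need it. Second, your approximate secant inequality carries an additive error $\langle \nabla_S J_\lambda(S_{\theta_t}), e_t\rangle$ bounded by a \emph{constant} of order $R^2\rho_M\sqrt{N}\|\nabla_S J_\lambda\|_2/C_h$, independent of $t$. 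Since $\delta_t \to 0$, you cannot absorb a constant error into $\delta_t$ uniformly: the telescoping argument then only proves convergence down to an accuracy floor of that order, not an unconditional $2\kappa R^2/(sT)$ bound, and Assumption \ref{assn:nets} lets $C_h$ depend on $R$, $N$, $\lambda$ but not on $T$ or the target accuracy $\epsilon$, so you cannot drive the floor to zero within the stated hypotheses. The paper sidesteps this entirely because its only use of the Hessian bound is inside Lemma \ref{lem:smooth}, where the linearization error multiplies $\|\theta_2-\theta_1\|_2^2$ and is harmlessly folded into the smoothness constant, while the secant step is performed with \emph{exact} concavity in $S$.
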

\begin{proof}
Fix $\theta_0$ and set $\theta_{t+1} = \theta_t - \eta \nabla_\theta J_\lambda(S_\theta)$. The step size $\eta$ is chosen so that by Lemma \ref{lem:smooth}, $J_\lambda(S_t) - J_\lambda(S_{t+1}) \geq \frac{s}{2\rho_M}\|\nabla_{\theta} J_\lambda(S_{\theta_t})\|_2^2$. 

By convexity, $J_\lambda(S^*) \geq J_\lambda(S_{\theta_t}) + \langle \nabla_S J_\lambda(S_{\theta_t}), S^* - S_{\theta_t} \rangle$, so that
\begin{align*}
    \|\nabla_{\theta} J_\lambda(S_{\theta_t})\|_2^2 \geq \rho_m \|\nabla_{S} J_\lambda(S_{\theta_t})\|_2^2 \geq (J_\lambda(S_{\theta_t}) - J_\lambda(S^*))^2 \left(\frac{\rho_m}{\|S_{\theta_t} - S^*\|_2^2 }\right).
\end{align*}

Setting $\Delta_t  = J_\lambda(S_{\theta_t}) - J_\lambda(S^*)$, this implies $\Delta_t \geq \Delta_{t+1} + \Delta_t^2 \left( \frac{s \rho_m}{2 \rho_M\|S_{\theta_t} - S^*\|_2^2 } \right)$ and thus  $\Delta_t \leq \left[ T \left( \frac{s \rho_m}{2 \rho_M\|S_{\theta_t} - S^*\|_2^2 } \right) \right]^{-1}.$ The claim follows from $\|S_{\theta_t} - S^*\|_2 < R$.
\end{proof}
Theorem \ref{thm:opt-neural-nets} follows immediately from Lemmas \ref{lem:smooth} and \ref{lem:gd}.
\end{proof}

\begin{thm-}[\ref{thm:stability} -- Stability of Regularized OT Problem]
Suppose $K_\lambda(\pi)$ is $s$-strongly convex in $l_1$ norm and let $\Lcl(\phi, \psi, \pi)$ be the Lagrangian of the regularized optimal transport problem. For $\hat{\phi}$, $\hat{\psi}$ which are $\epsilon$-approximate maximizers of $J_\lambda(\phi, \psi)$, the pseudo-plan $\hat{\pi} = M_f(V(x, y; \hat{\phi}, \hat{\psi})) \sigma(x) \tau(y)$ satisfies
\begin{align*}
    |\hat{\pi} - \pi^* |_1 \leq \sqrt{\frac{2\epsilon}{s}} \leq \frac{1}{s} \left| \nabla_{\hat{\pi}} \Lcl(\hat{\phi}, \hat{\psi},\hat{\pi}) \right|_1.
\end{align*}
\end{thm-}

\begin{proof}
For indices $i$, denote by $S_i$ the tuple $(\phi_i, \psi_i, \pi_i)$. The regularized optimal transport problem has Lagrangian $\Lcl(\phi, \psi, \pi)$ given by
\begin{align*} 
     \Lcl(\phi, \psi, \pi) = & \ \E_\pi [c(x, y)] + \lambda  H_f(\pi) + \E_\sigma[\phi(x)] - \E_\pi[\phi(x)] + \E_\tau[\phi(y)] - \E_\pi[\psi(y)]
\end{align*}
Because $\Lcl(\phi, \psi, \pi)$ is a sum of $K_\lambda(\pi)$ and linear terms, the Lagrangian inherits $s$-strong convexity w.r.t. the argument $\pi$:
\begin{align*}
     \Lcl(S_2) \geq & \ \Lcl(S_1) + \langle \nabla \Lcl(S_1), S_2 - S_1 \rangle + \frac{s}{2} |\pi_2 - \pi_1|_1^2.
\end{align*}

Letting $S^* = (\phi^*, \psi^*, \pi^*)$ be the optimal solution and $\hat{S} = (\hat{\phi}, \hat{\psi}, \hat{\pi})$ be an $\epsilon$-approximation, it follows that 
\begin{align} \label{eqn:stability-err-bound}
    \epsilon \geq \Lcl(\hat{S}) - \Lcl(S^*) \geq \frac{s}{2}| \hat{\pi} - \pi^* |_1^2 \implies | \hat{\pi} - \pi^* | \leq \sqrt{\frac{2 \epsilon }{s} }.
\end{align}
Additionally, note that strong convexity implies a Polyak-Łojasiewicz (PL) inequality w.r.t. $\hat{\pi}$. 
\begin{align} \label{eqn:pl-ineq}
    s \left( \Lcl(\hat{S}) - \Lcl(S^*) \right) \leq \frac{1}{2} |\nabla_{\pi} \Lcl(\hat{S}) |_1^2.
\end{align}
The second inequality follows from \eqref{eqn:stability-err-bound} and the PL inequality \eqref{eqn:pl-ineq}.

\end{proof}

\subsection{Statistical Estimation of Sinkhorn Plans} \label{supp:stat-est}

We consider consider estimating an entropy regularized OT plan when $\Ycl$ = $\Xcl$. {Let $\hat{\sigma}$, $\hat{\tau}$ be empirical distributions generated by drawing $n \geq 1$ i.i.d. samples from $\sigma$, $\tau$ respectively.} Let $\pi^\lambda_n$ be the Sinkhorn plan between $\hat \sigma$ and $\hat \tau$ at regularization $\lambda$, and let $\mathsf{D} := \diam(\mathcal{X})$. For simplicity, we also assume that $\sigma$ and $\tau$ are sub-Gaussian. We also assume that $n$ is fixed. Under these assumptions, we will show that $W_1(\pi_n^\lambda , \pi^\lambda)  \lesssim n^{-1/2}$.

The following result follows from Proposition E.4 and E.5 of of~\citet{Luise_Salzo_Pontil_Ciliberto_2019} and will be useful in deriving the statistical error between $\pi^\lambda_n$ and $\pi^\lambda$. This result characterizes fast statistical convergence of the Sinkhorn potentials as long as the cost is sufficiently smooth.
\begin{prop} \label{prop:sink-pot}
    Suppose that $c \in \mathcal C^{s+1}(\mathcal X \times \mathcal X)$. Then, for any $\sigma, \tau $ probability measures supported on $\mathcal X$, with probability at least $1 - \tau$, 
    \begin{align*}
        \| v - v_n \|_\infty, \| u - u_n \|_\infty \lesssim  \frac{\lambda e^{3 \mathsf D / \lambda} \log 1/\tau}{\sqrt{n}},
    \end{align*}
    where $(u,v)$ are the Sinkhorn potentials for $ \sigma, \tau $ and $(u_n, v_n)$ are the Sinkhorn potentials for $\hat \sigma, \hat \tau $.
\end{prop}

Let $\pi_n^\lambda = M_n \sigma_n \tau_n$ and $\pi^\lambda = M \sigma \tau$,
We recall that
\[
M(x, y) = \frac{1}{e} \exp \left(\frac{1}{\lambda} (\phi(x) + \psi(y) - c(x, y) )\right),
\]
\[
M_n(x, y) = \frac{1}{e} \exp \left(\frac{1}{\lambda} (\phi_n(x) + \psi_n(y) - c(x, y) )\right),
\]
We note that $M$ and $M_n$ are uniformly bounded by $e^{3\mathsf{D}/\lambda}$~\cite{Luise_Salzo_Pontil_Ciliberto_2019} and $M$ inherits smoothness properties from $\phi$, $\psi$, and $c$.

We can write (for some optimal, bounded, 1-Lipschitz $f_n$)
\begin{align}\nonumber
     W_1(\pi_n^\lambda , \pi^\lambda) &= |\int f_n \pi_n^\lambda - \int f_n \pi^\lambda| \\ \nonumber
     &\leq |\int f_n (H_n - H)\sigma_n \tau_n| +  |\int f_n H (\sigma_n \tau_n - \sigma \tau)| \\
     & \leq  |f_n|_\infty |H_n - H|_\infty  +  |\int f_n H (\sigma_n \tau_n - \sigma \tau)|.\label{eq:w1bound}
\end{align}
If $\sigma$ and $\tau$ are $\beta^2$ subGaussian, then we can bound the second term with high probability:
\begin{align*}
    \mathbb{P}\left(|\frac{1}{n^2} \sum_i \sum_j f_n(X_i, Y_j) H(X_i, Y_j) - \mathbb{E}_{\sigma\times \tau} f_n(X, Y) H(X, Y)| > t\right) < e^{-n^2 \frac{t^2}{2 \beta^2}}.
\end{align*}
Setting $t = \sqrt{2} \log(\delta) \beta /n$ in this expression, we get that w.p.~at least $1-\delta$, 
\[
    |\frac{1}{n^2} \sum_i \sum_j f_n(X_i, Y_j) H(X_i, Y_j) - \mathbb{E}_{\sigma\times \tau} f_n(X, Y) H(X, Y)| < \frac{\sqrt{2}\beta \log \delta}{n}.
\]

Now to bound the first term in~\eqref{eq:w1bound}, we use the fact that $f_n$ is 1-Lipschitz and bounded by $D$. For the optimal potentials $\phi$ and $\psi$ in the original Sinkhorn problem for $\sigma$ and $\tau$, we use the result of Proposition~\ref{prop:sink-pot} to yield
\begin{align*}
    |H_n(x, y) - H(x, y)| &= \left| \frac{1}{e} \exp \left(\frac{1}{\lambda} (\phi_n(x) + \psi_n(y) - c(x, y) )\right) - \frac{1}{e} \exp \left(\frac{1}{\lambda} (\phi(x) + \psi(y) - c(x, y) )\right) \right| \\ \nonumber
    &= \frac{1}{e} \left|\exp \left(\frac{1}{\lambda} (\phi(x) + \psi(y) - c(x, y) ) \right) \left( 1 - \exp \left(\frac{\phi(x) - \phi_n(x) }{\lambda}\right) \exp\left( \frac{\psi(y) - \psi_n(y)}{\lambda} ) \right) \right)\right| \\
    &\lesssim e^{3 \mathsf{D}/\lambda} |1 - e^{\frac{2}{\lambda \sqrt{n}}}| \\
    & \lesssim \frac{e^{3 \mathsf{D}/\lambda}}{\lambda \sqrt{n}}.
\end{align*}

Thus, putting this all together,
\begin{align*}
     W_1(\pi_n^\lambda , \pi^\lambda) & \lesssim \frac{\mathsf{D}}{\sqrt{n}} + \frac{1}{n}.
\end{align*}

Interestingly, the rate of estimation of the Sinkhorn plan breaks the curse of dimensionality. It must be noted, however, that the exponential dependence of Proposition \ref{prop:sink-pot} on $\lambda^{-1}$ implies we can only attain these fast rates in appropriately large regularization regimes. 

\subsection{Log-concavity of Sinkhorn Factor}

The optimal entropy regularized Sinkhorn plan is given by
\[
\pi^*(x, y) = \frac{1}{e} \exp\left( \frac{1}{\lambda} \left(\phi^*(x) + \psi^*(y) - c(x, y) \right) \right) \sigma(x)\tau(y).
\]
This implies that the conditional Sinkhorn density of $Y|X$ is 
\[
\pi^*(y|x) = \frac{1}{e} \exp\left( \frac{1}{\lambda} \left(\phi^*(x) + \psi^*(y) - c(x, y) \right) \right) \tau(y).
\]
The optimal potentials satisfy fixed point equations. In particular,
\[
    \psi^*(y) = -\lambda \log \int \exp\left[- \frac{1}{\lambda} \left(c(x,y) - \phi^*(x)\right) \right] d\sigma(x).
\]
Using this result, one can prove the following lemma.
\begin{lem}[\cite{benamou2020capacity}]
    For the cost $\|x-y\|^2$, the map 
    $$h(y) = \exp\left( \frac{1}{\lambda} \left(\phi^*(x) + \psi^*(y) - \|x-y\|^2 \right) \right)$$ 
    is log-concave.
\end{lem}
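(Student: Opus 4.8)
The plan is to show directly that $\log h$ is concave in $y$ for fixed $x$. Since $\lambda>0$ and $\phi^*(x)$ does not depend on $y$, this is equivalent to showing that $y\mapsto \psi^*(y)-\|x-y\|^2$ is concave. Although $-\|x-y\|^2$ is (strongly) concave, $\psi^*$ itself need not be concave, so a naive ``sum of concave functions'' argument is insufficient; the argument must genuinely use the soft $c$-transform fixed-point equation satisfied by $\psi^*$ together with the quadratic form of the cost.

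First I would substitute the fixed-point equation $\psi^*(y) = -\lambda \log \int \exp[-\tfrac{1}{\lambda}(\|x'-y\|^2 - \phi^*(x'))]\,d\sigma(x')$ and expand $\|x'-y\|^2 = \|x'\|^2 - 2\langle x',y\rangle + \|y\|^2$. The factor $e^{-\|y\|^2/\lambda}$ is independent of $x'$ and pulls out of the integral, giving
$$\psi^*(y) = \|y\|^2 - \lambda \log \int \exp\!\left[-\tfrac{1}{\lambda}\bigl(\|x'\|^2 - 2\langle x',y\rangle - \phi^*(x')\bigr)\right] d\sigma(x').$$
The key observation is that the $\|y\|^2$ produced here cancels the $\|y\|^2$ hidden inside $-\|x-y\|^2$, since $\|y\|^2-\|x-y\|^2 = 2\langle x,y\rangle-\|x\|^2$ is affine in $y$. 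Hence, writing $d\mu(x') := \exp[-\tfrac1\lambda(\|x'\|^2-\phi^*(x'))]\,d\sigma(x')$, which is a finite nonnegative measure (finiteness uses compactness of $\mathcal{X}$ and boundedness of $\phi^*$), one gets
$$\log h(y) = \tfrac1\lambda\bigl(\phi^*(x) + 2\langle x,y\rangle - \|x\|^2\bigr) - \log \int \exp\!\left[\tfrac{2}{\lambda}\langle x',y\rangle\right] d\mu(x').$$

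It then remains to note that for any finite nonnegative measure $\mu$ and measurable $a(\cdot)$, the map $y\mapsto \log\int e^{\langle a(x'),y\rangle}\,d\mu(x')$ is convex --- this is the convexity of a log-partition / cumulant generating function, immediate from H\"older's inequality with exponents $1/t$ and $1/(1-t)$ along a segment $y = ty_1+(1-t)y_0$. Consequently $-\log\int \exp[\tfrac2\lambda\langle x',y\rangle]\,d\mu(x')$ is concave, and adding the affine term $\tfrac1\lambda(\phi^*(x)+2\langle x,y\rangle-\|x\|^2)$ preserves concavity; therefore $\log h$ is concave, i.e.\ $h$ is log-concave.

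The only nontrivial step is spotting the cancellation of the $\|y\|^2$ term; everything else is routine. That cancellation is precisely where the squared-Euclidean cost is used, and it explains why the statement can fail for general costs: without it one is reduced to needing $\psi^*$ itself to be concave, which it generally is not. The remaining technical points --- that $\mu$ is a finite measure and that H\"older's inequality applies under the integral --- follow from compactness of $\mathcal{X}$ and the continuity/boundedness of the Sinkhorn potentials already used elsewhere in the appendix.
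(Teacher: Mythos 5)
Your proof is correct, but it takes a genuinely different route from the one in the paper. The paper's argument differentiates $\log h$ twice under the integral and recognizes the Hessian as
$\nabla^2 \log h(y) = -\tfrac{4}{\lambda^2}\,\mathsf{Cov}_{\rho\, d\sigma}(X-y) \preceq 0$
for an explicit reweighted density $\rho\,d\sigma$; concavity then follows from positive semidefiniteness of covariance matrices. You instead exploit the algebraic cancellation $\|y\|^2 - \|x-y\|^2 = 2\langle x,y\rangle - \|x\|^2$ after substituting the soft $c$-transform fixed-point equation, reducing the claim to convexity of the cumulant generating function $y \mapsto \log\int e^{\frac{2}{\lambda}\langle x',y\rangle}\,d\mu(x')$, which you get from H\"older's inequality. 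The two arguments are of course two faces of the same fact --- the Hessian of a log-partition function \emph{is} the covariance --- but your version is derivative-free (no differentiation under the integral sign to justify), and it isolates exactly where the quadratic cost enters, which makes it clear why the statement can fail for general costs. What the paper's computation buys in exchange is the explicit quantitative form of the Hessian, which is what the surrounding discussion uses to talk about $\beta$-strong log-concavity of $h$ and the resulting Langevin mixing rates; your argument as written only yields plain log-concavity, though it could be upgraded by quantifying the strong convexity of the cumulant generating function. Either proof is acceptable for the lemma as stated.
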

\begin{proof}
The proof comes by differentiating the map. We calculate the gradient,
\begin{align*}
    \nabla \log h(y) = -2\frac{y-x}{\lambda} + \frac{2}{\lambda} \frac{\int \exp\left[- \frac{1}{\lambda} \left(\|x-y\|^2 - \phi^*(x)\right) \right]  (y-x) d\sigma(x)}{\int \exp\left[- \frac{1}{\lambda} \left(\|x-y\|^2 - \phi^*(x)\right) \right] d\sigma(x)}
\end{align*}
and the Hessian,
\begin{align*}
    &\nabla^2 \log h(y) = -2\frac{I}{\lambda}  \\ &+\frac{4}{\lambda^2} \frac{\int \exp\left[- \frac{1}{\lambda} \left(\|x-y\|^2 - \phi^*(x)\right) \right]  (y-x) d\sigma(x) \int \exp\left[- \frac{1}{\lambda} \left(\|x-y\|^2 - \phi^*(x)\right) \right]  (y-x)^\top d\sigma(x)}{(\int \exp\left[- \frac{1}{\lambda} \left(\|x-y\|^2 - \phi^*(x)\right) \right] d\sigma(x))^2} \\
    &-\frac{4}{\lambda^2} \frac{\int \exp\left[- \frac{1}{\lambda} \left(\|x-y\|^2 - \phi^*(x)\right) \right] (y-x) (y-x)^\top d\sigma(x)}{\int \exp\left[- \frac{1}{\lambda} \left(\|x-y\|^2 - \phi^*(x)\right) \right] d\sigma(x)} \\
    &+2 I/\lambda \frac{\int \exp\left[- \frac{1}{\lambda} \left(\|x-y\|^2 - \phi^*(x)\right) \right]  d\sigma(x)}{\int \exp\left[- \frac{1}{\lambda} \left(\|x-y\|^2 - \phi^*(x)\right) \right] d\sigma(x)} \\
    &= -\frac{4}{\lambda^2} \Big(- \frac{\int \exp\left[- \frac{1}{\lambda} \left(\|x-y\|^2 - \phi^*(x)\right) \right]  (y-x) d\sigma(x) \int \exp\left[- \frac{1}{\lambda} \left(\|x-y\|^2 - \phi^*(x)\right) \right]  (y-x)^\top d\sigma(x)}{(\int \exp\left[- \frac{1}{\lambda} \left(\|x-y\|^2 - \phi^*(x)\right) \right] d\sigma(x))^2} \\
    &+ \frac{\int \exp\left[- \frac{1}{\lambda} \left(\|x-y\|^2 - \phi^*(x)\right) \right] (y-x) (y-x)^\top d\sigma(x)}{\int \exp\left[- \frac{1}{\lambda} \left(\|x-y\|^2 - \phi^*(x)\right) \right] d\sigma(x)} \Big)
\end{align*}
In the last term, we recognize that 
\[
\rho(x) = \frac{\exp\left[- \frac{1}{\lambda} \left(\|x-y\|^2 - \phi^*(x)\right) \right]}{\int \exp\left[- \frac{1}{\lambda} \left(\|x-y\|^2 - \phi^*(x)\right) \right] d\sigma(x)}
\]
forms a valid density with respect to $\sigma$, and thus 
\[
\nabla^2 \log h(y) = -\frac{4}{\lambda^2} \mathsf{Cov}_{\rho d\sigma}(X-y)
\]
where we take the covariance matrix of $X-y$ with respect to the density $\rho d\sigma$. 
\end{proof}

Suppose, for sake of argument, that $\tau(y)$ is $\alpha$ strongly log-concave, and the function $h(y)$ is $\beta$ strongly log-concave. Then, $\pi_{Y|X=x}\propto h(y) \tau(y)$, $\alpha + \beta$ strongly log-concave. In particular, standard results on the mixing time of the Langevin diffusion implies that the diffusion for $\pi_{Y|X=x}$ mixes faster than the diffusion for the marginal $\tau$ alone. Also, as $\lambda \to 0$, the function $h(y)$ concentrates around $\phi_{OT}(x) + \psi_{OT}(y) - \|x-y\|^2$, where $\phi_{OT}$ and $\psi_{OT}$ are the optimal transport potentials. In particular, if there exists an optimal transport map between $\sigma$ and $\tau$, then $h(y)$ concentrates around the unregularized optimal transport image $y=T(x)$. 

% This derivation should also yield local strong convexity about the mode of $\rho$.

% We also can examine the gradient
% \begin{align*}
%     \nabla \log h(y) &= \frac{2}{\lambda} \Big( -(y-x) + \mathbb{E}_{\rho d\sigma}(X-y) \Big) \\
%     &=\frac{2}{\lambda} \Big(  \mathbb{E}_{\rho d\sigma}(X)+x - 2y \Big) 
% \end{align*}
% This blows up when $y$ is far from $\mathbb{E}_{\rho d\sigma}(X)$ and $x$.

% Also, notice that
% \begin{align*}
%      \psi^*(y) &= -\lambda \log \int \exp\left[- \frac{1}{\lambda} \left(c(x,y) - \phi^*(x)\right) \right] d\sigma(x)\\
%      & \leq -\lambda \int \left[- \frac{1}{\lambda} \left(c(x,y) - \phi^*(x)\right) \right] d\sigma(x) \\
%      &=\int c(x,y) - \phi^*(x) d\sigma(x).
% \end{align*}
% This implies that
% \[
%     \int c(x,y) - \phi^*(x) - \psi^*(y) d\sigma(x) \geq 0.
% \]

\section{Experimental Details}\label{sup:exp-details}

\subsection{Network Architectures}
Our method integrates separate neural networks playing the roles of \textit{unconditional score estimator}, \textit{compatibility function}, and \textit{barycentric projector}. In our experiments each of these networks uses one of two main architectures: a fully connected network with ReLU activations, and an image-to-image architecture introduced by \citet{Song_Ermon_2020} that is inspired by architectures for image segmentation.

For the first network type, we write ``ReLU FCN, Sigmoid output, $w_0 \to w_1 \to \ldots \to w_k \to w_{k+1}$,'' for integers $w_i \geq 1$, to indicate a $k$-hidden-layer fully connected network whose internal layers use ReLU activations and whose output layer uses sigmoid activation. The hidden layers have dimension $w_1, w_2, \ldots, w_k$ and the network has input and output with dimension $w_0, w_{k+1}$ respectively. 

For the second network type, we replicate the architectures listed in \citet[Appendix B.1, Tables 2 and 3]{Song_Ermon_2020} and refer to them by name, for example ``NCSN \px{32}'' or ``NCSNv2 \px{32}.''

Our implementation of these experiments may be found in the supplementary code submission. % GITHUB URL HERE

\subsection{Image Sampling Parameter Sheets}
\textbf{MNIST $\leftrightarrow$ USPS}: details for qualitative transportation experiments between MNIST and USPS in Figure \ref{fig:usps-mnist} are given in Table \ref{tab:expt-digits-training}. 

\textbf{CelebA, Blur-CelebA $\to$ CelebA}: we sample \px{64} CelebA images. The Blur-CelebA dataset is composed of CelebA images which are first resized to \px{32} and then resized back to \px{64}, creating a blurred effect. The FID computations in Table \ref{tbl:FID} used a shared set of training parameters given in Table \ref{tab:expt-faces-training}. The sampling parameters for each FID computation are given in Table \ref{tab:expt-faces-sampling}.

\textbf{Synthetic Data}: details for the synthetic data experiment shown in Figure  \ref{fig:methods-comparison} are given in Table \ref{tab:expt-synth-sampling}.

\begin{table}[ht]
\setstretch{1.4}
\begin{tabular}{|l|l|p{7.5cm}|}
\hline 
Problem Aspect                          & Hyperparameters       & Numbers and details                                                                                  \\ \hline \hline
\multirow{2}{*}{Source}                 & Dataset               & USPS \cite{usps}                                                                                     \\ \cline{2-3} 
                                        & Preprocessing         & None                                                                                                 \\ \hline
\multirow{2}{*}{Target}                 & Dataset               & MNIST \cite{mnist}                                                                                   \\ \cline{2-3} 
                                        & Preprocessing         & Nearest neighbor resize to \px{16}.                                                                  \\ \hline
\multirow{4}{*}{Score Estimator}        & Architecture          & NCSN \px{32}, applied as-is to \px{16} images.                                                       \\ \cline{2-3} 
                                        & Loss                  & Denoising Score Matching                                                                             \\ \cline{2-3} 
                                        & Optimization          & \makecell[l]{Adam, lr $=10^{-4}$, $\beta_1=0.9$, $\beta_2=0.999$. \\ No EMA of model parameters.}          \\ \cline{2-3} 
                                        & Training              & \makecell[l]{40000 training iterations, \\ 128 samples per minibatch. }                              \\ \hline
\multirow{4}{*}{Compatibility}          & Architecture          & \makecell[l]{ReLU network with ReLU output activation,  \\$256 \to 1024 \to 1024 \to1$ }             \\ \cline{2-3} 
                                        & Regularization        & $\chi^2$ Regularization, $\lambda = 0.001$.                                                          \\ \cline{2-3} 
                                        & Optimization          & Adam, lr $=10^{-6}$, $\beta_1=0.9$, $\beta_2=0.999$                                                        \\ \cline{2-3} 
                                        & Training              & \makecell[l]{5000 training iterations, \\ 1000 samples per minibatch. }                                 \\ \hline
\multirow{3}{*}{\makecell[l]{Barycentric \\Projection}} & Architecture          & \makecell[l]{ReLU network with sigmoid output activation, \\ $256 \to 1024 \to 1024 \to 256$. \\Input pixels are scaled to $[-1, 1]$ by $x \mapsto 2x-1$. }          \\ \cline{2-3} 
                                        & Optimization          & Adam, lr $=10^{-6}$, $\beta_1=0.9$, $\beta_2=0.999$                                                        \\ \cline{2-3} 
                                        & Training              & \makecell[l]{5000 training iterations, \\ 1000 samples per minibatch. }                                 \\ \hline
\multirow{5}{*}{Sampling}               & Annealing Schedule    & \makecell[l]{7 noise levels decaying geometrically, \\ $\tau_0 = 0.2154, \ldots, \tau_6 = 0.01$.} \\ \cline{2-3} 
                                        & Step size             & $\epsilon = 5 \cdot 10^{-6}$                                                                        \\ \cline{2-3} 
                                        & Steps per noise level & $T=20$ \\ \cline{2-3} 
                                        & Denoising? \cite{jolicoeur-martineau2021adversarial} & Yes \\ \cline{2-3}
                                        & $\chi^2$ SoftPlus threshold & $\alpha = 1000$ \\ \hline
\end{tabular}
\vspace{0.1cm}
\caption{ Data and model details for the \textbf{USPS $\to$ MNIST} qualitative experiment shown in Figure \ref{fig:usps-mnist}. For \textbf{MNIST $\to$ USPS}, we use the same configuration with source and target datasets swapped.}
 \label{tab:expt-digits-training}
 \setstretch{1}
\end{table}

\begin{table}[ht]
\setstretch{1.4}
\begin{tabular}{|l|l|p{8.1cm}|}
\hline 
Problem Aspect                          & Hyperparameters       & Numbers and details                                                                                  \\ \hline \hline
\multirow{2}{*}{Source}                 & Dataset               & CelebA or Blur-CelebA \cite{liu2015faceattributes}                                                   \\ \cline{2-3} 
                                        & Preprocessing         & \makecell[l]{\px{140} center crop. \\ 
                                                                   If Blur-CelebA: nearest neighbor resize to \px{32}. \\
                                                                   Nearest neighbor resize to \px{64}. \\
                                                                   Horizontal flip with probability 0.5. }                                                             \\ \hline
\multirow{2}{*}{Target}                 & Dataset               & CelebA \cite{liu2015faceattributes}                                                                               \\ \cline{2-3} 
                                        & Preprocessing         & \makecell[l]{\px{140} center crop. \\ 
                                                                   Nearest neighbor resize to \px{64}. \\
                                                                   Horizontal flip with probability 0.5. }                                                             \\ \hline
\multirow{4}{*}{Score Estimator}        & Architecture          & NCSNv2 \px{64}.                                                                                      \\ \cline{2-3} 
                                        & Loss                  & Denoising Score Matching                                                                             \\ \cline{2-3} 
                                        & Optimization          & \makecell[l]{Adam, lr $=10^{-4}$, $\beta_1=0.9$, $\beta_2=0.999$. \\ Parameter EMA at rate $0.999$. }          \\ \cline{2-3} 
                                        & Training              & \makecell[l]{210000 training iterations, \\ 128 samples per minibatch. }                              \\ \hline
\multirow{4}{*}{Compatibility}          & Architecture          & \makecell[l]{ReLU network with ReLU output activation,  \\$3 \cdot 64^2 \to 2048 \to \ldots \to 2048 \to1$ (8 hidden layers). }             \\ \cline{2-3} 
                                        & Regularization        & \makecell[l]{Varies in $\chi^2$ reg., $\lambda \in \{0.1, 0.1, 0.001\}$,\\
                                        and KL reg., $\lambda \in \{0.1, 0.01, 0.005\}$. }\\ \cline{2-3} 
                                        & Optimization          & Adam, lr $=10^{-6}$, $\beta_1=0.9$, $\beta_2=0.999$                                                        \\ \cline{2-3} 
                                        & Training              & \makecell[l]{5000 training iterations, \\ 1000 samples per minibatch. }                                 \\ \hline
\multirow{3}{*}{\makecell[l]{Barycentric \\ Projection}} & Architecture          & \makecell[l]{NCSNv2 \px{64} applied as-is for image generation. }          \\ \cline{2-3} 
                                        & Optimization          & Adam, lr $=10^{-7}$, $\beta_1=0.9$, $\beta_2=0.999$                                                        \\ \cline{2-3} 
                                        & Training              & \makecell[l]{20000 training iterations, \\ 64 samples per minibatch. }                                 \\ \hline
\end{tabular}
\vspace{0.1cm}
\caption{Training details for the \textbf{CelebA, Blur-CelebA $\to$ CelebA} FID experiment (Figure \ref{fig:methods-comparison}).}
 \label{tab:expt-faces-training}
 \setstretch{1}
\end{table}

\begin{table}[ht]

\setstretch{1.4}
\begin{tabular}{|l|l|l|l|l|l|}
\hline
Problem                     & Noise ($\tau_1, \tau_k$) & Step Size          & Steps & Denoising? \cite{jolicoeur-martineau2021adversarial} & $\chi^2$ SoftPlus Param. \\ \hline
$\chi^2$, $\lambda = 0.1$   & $(9, 0.01)$                      & $15 \cdot 10^{-7}$ & $k=500$   & Yes                   & $\alpha=10$                       \\ \hline
$\chi^2$, $\lambda = 0.01$  & \multicolumn{5}{l|}{\ditto}                                                                                      \\ \hline
$\chi^2$, $\lambda = 0.001$ & \multicolumn{5}{l|}{\ditto}                                                                                      \\ \hline
KL, $\lambda=0.1$           & $(90, 0.1)$                      & $15 \cdot 10^{-7}$ & $k=500$   & Yes                   & --                       \\ \hline
KL, $\lambda=0.01$          & \multicolumn{5}{l|}{\ditto}                                                                                      \\ \hline
KL, $\lambda=0.005$         & $(90, 0.1)$                      & $1 \cdot 10^{-7}$  & $k=500$   & Yes                   & --                       \\ \hline
\end{tabular}
\vspace{0.1cm}
\caption{Sampling details for the \textbf{CelebA, Blur-CelebA $\to$ CelebA} FID experiment (Figure \ref{fig:methods-comparison}).}
 \label{tab:expt-faces-sampling}
 \setstretch{1}
\end{table}

\begin{table}[ht]
\setstretch{1.4}
\begin{tabular}{|l|l|p{7.5cm}|}
\hline 
Problem Aspect                          & Hyperparameters       & Numbers and details                                                                                  \\ \hline \hline
\multirow{2}{*}{Source}                 & Dataset               & \makecell[l]{Gaussian in $\R^{784}$, \\
                                                                    Mean and covariance are that of MNIST} \\ \cline{2-3} 
                                        & Preprocessing         & None                                                                                                 \\ \hline
\multirow{2}{*}{Target}                 & Dataset               & Unit gaussian in $\R^{784}$.                                                                                        \\ \cline{2-3} 
                                        & Preprocessing         & None                                                                 \\ \hline
Score Estimator       & Architecture          & None (score is given by closed form) \\ \hline
\multirow{4}{*}{Compatibility}          & Architecture          & \makecell[l]{ReLU network with ReLU output activation,  \\$784 \to 2048 \to 2048 \to 2048 \to 2048 \to1$ }             \\ \cline{2-3} 
                                        & Regularization        & KL Regularization, $\lambda \in \{1, 0.5, 0.25\}$.                                                          \\ \cline{2-3} 
                                        & Optimization          & Adam, lr $=10^{-6}$, $\beta_1=0.9$, $\beta_2=0.999$                                                        \\ \cline{2-3} 
                                        & Training              & \makecell[l]{5000 training iterations, \\ 1000 samples per minibatch. }                                 \\ \hline
\multirow{5}{*}{Sampling}               & Annealing Schedule    & No annealing. \\ \cline{2-3} 
                                        & Step size             & $\epsilon = 5 \cdot 10^{-3}$                                                                        \\ \cline{2-3} 
                                        & Mixing steps & $T=1000$ \\ \cline{2-3} 
                                        & Denoising? \cite{jolicoeur-martineau2021adversarial} & Not applicable. \\ \cline{2-3}
\end{tabular}
\vspace{0.1cm}
\caption{ Sampling and model details for the synthetic experiment shown in Figure \ref{fig:methods-comparison}. }
 \label{tab:expt-synth-sampling}
 \setstretch{1}
\end{table}

\end{document}